%% file: arxiv_Feb.tex
\pdfoutput=1

\documentclass[11pt,a4paper]{article}
\usepackage[hyperref]{acl}
\usepackage{times}
\usepackage{inconsolata}
\usepackage{latexsym}
\usepackage{danudefs}
\usepackage{algorithmic}
\usepackage{algorithm}
\usepackage{color}
\usepackage{booktabs}
\usepackage{xspace}
\usepackage{url}
\usepackage{braket}
\usepackage[T1]{fontenc}
\usepackage[utf8]{inputenc}
\usepackage{graphicx}
\usepackage{microtype}
\usepackage{longtable}
\usepackage{multirow}
\usepackage{subcaption}
\usepackage{graphicx}
\usepackage[stable]{footmisc}
\usepackage{tabularx}
\usepackage{subcaption}
\usepackage{xurl}
\usepackage{array}
\usepackage{titlesec}
\usepackage{supertabular, booktabs}
\titlespacing*{\paragraph}{0pt}{0.5ex}{1em}
\usepackage[english]{babel}
\usepackage{hyperref}
\addto\captionsenglish{%
}
\addto\extrasenglish{%
}

\input{myacronyms}

\title{Map of Encoders -- Mapping Sentence Encoders using \\Quantum Relative Entropy}

\author{Gaifan Zhang \qquad \qquad Danushka Bollegala\\
  Department of Computer Science, University of Liverpool, UK\\
  {\tt \{sggzhan8,danushka\}@liverpool.ac.uk}}
\date{}

\begin{document}
\maketitle

\begin{abstract}
    We propose a method to compare and visualise sentence encoders at scale by creating a \textbf{map of encoders} where each sentence encoder is represented in relation to the other sentence encoders.
    Specifically, we first represent a sentence encoder using an embedding matrix of a sentence set, where each row corresponds to the embedding of a sentence.
    Next, we compute the \ac{PIP} matrix for a sentence encoder using its embedding matrix.
    Finally, we create a feature vector for each sentence encoder reflecting its \ac{QRE} with respect to a unit base encoder.
    We construct a map of encoders covering 1101 publicly available sentence encoders, providing a new perspective of the landscape of the pre-trained sentence encoders.
    Our map accurately reflects various relationships between encoders, where encoders with similar attributes are proximally located on the map.
    Moreover, our encoder feature vectors can be used to accurately infer downstream task performance of the encoders, such as in retrieval and clustering tasks, demonstrating the faithfulness of our map.
\end{abstract}

\section{Introduction}
\label{sec:intro}

\begin{figure}[t!]
    \centering
    \includegraphics[width=1\linewidth]{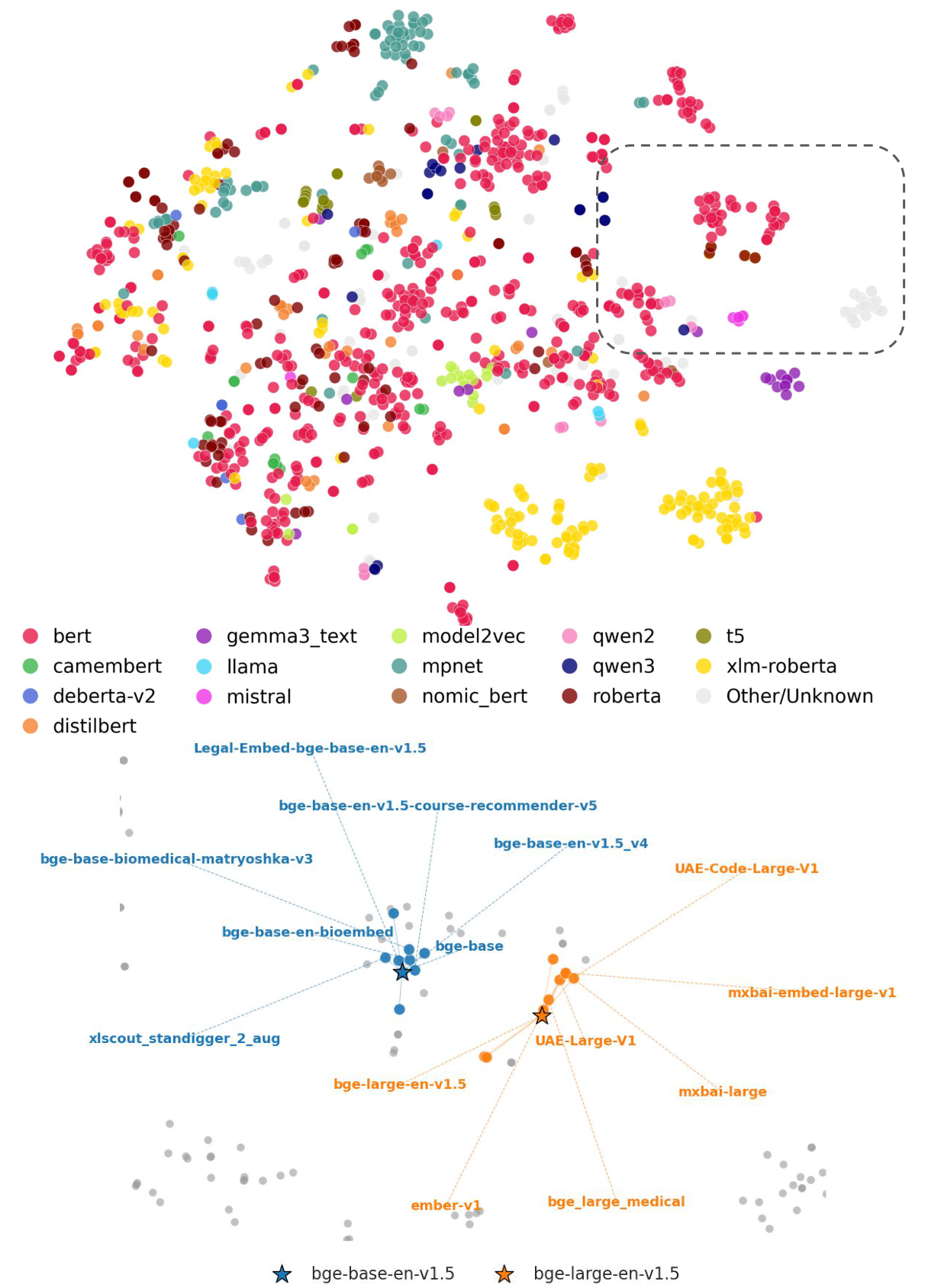} 
    
    \caption{Map of Encoders. Top: Map for 1101 sentence encoders, visualised by t-SNE and coloured by the encoder type. Bottom: A 30\% zoomed-in view of the dotted area showing the top-7 nearest neighbours for \href{https://huggingface.co/BAAI/bge-base-en-v1.5}{bge-base} and \href{https://huggingface.co/BAAI/bge-large-en-v1.5}{bge-large}. We see that the nearest neighbours belong to the same primary architecture and are closely located.}
    \label{fig:map-of-encoders}
\end{figure}

A large number of sentence encoders have been developed following different training algorithms \cite{Gao:2021c,Chen:2023a,Xu:2023a,Li:2020b} and fine-tuned on diverse tasks and datasets.
At the time of writing, 17,515 sentence encoders are published in \href{https://huggingface.co/models}{Hugging Face Hub}, and their global landscape remains unclear.
This is a significant blocker when improving or selecting sentence encoders for NLP applications.
Moreover, there is no universally agreed-upon metric for comparing sentence encoders.
For example, models are often grouped by attributes such as their parameter size, developer, pre-trained data source, fine-tuned tasks, etc. in the Hugging Face Hub, while leader-boards such as \href{https://huggingface.co/spaces/mteb/leaderboard}{MTEB} group models by their downstream task performance~\cite{muennighoff2023mtebmassivetextembedding}. 

To address this need, we propose a scalable and systematic method for comparing sentence encoders.
Unlike decoder-only \acp{LLM} that generate texts, which can be compared using likelihood scores~\cite{Oyama:2025a}, mapping sentence encoders is a significantly challenging task due to three main reasons: 
(1) \textbf{multi-variateness}: unlike the scalar likelihood scores that can be directly obtained from decoders, encoders return multi-variate embeddings, which require special care when comparing.
(2) \textbf{misalignment}: the vector spaces spanned by independently trained sentence encoders are often misaligned and incomparable.
(3) \textbf{dimensionality mismatch}: the dimensionalities of the encoders vary significantly from one another (i.e. in [384, 4096]) and are usually high.
Therefore, previously proposed visualisation methods for text-generating decoder \acp{LLM} cannot be applied directly to sentence encoders.

To fill this gap, we propose a method to map a given set of sentence encoders onto a shared two-dimensional space.
First, we represent each sentence encoder using an \emph{embedding matrix} computed from a fixed set of sentences.
Second, we convert the embedding matrix into a \ac{PIP} matrix, which acts as an encoder dimensionality-independent representation.
We further normalise the \ac{PIP} matrix to create a density matrix.
Third, we use \ac{QRE} to define a divergence measure between two encoders that simultaneously considers the eigenspaces of their density matrices, providing a scale, rotation and translation invariant information-theoretic comparator.
Using the spectral components in the \ac{QRE} computation, we represent all encoders by feature vectors in a shared feature space.
Finally, we use t-SNE~\cite{vandermaaten2008visualizing} and create a map that contains 1101 sentence encoders as shown in \autoref{fig:map-of-encoders}.

We see that encoders of varying parameter sizes, trained on different datasets and fine-tuned on diverse tasks, belonging to multiple primary architectures, are intuitively grouped in our map (\autoref{sec:results}), highlighting their attributes.
Moreover, we find a strong correlation (Spearman $>$ 0.8) between the actual downstream task performance of an encoder and that predicted using its feature vector for retrieval and clustering tasks (\autoref{sec:exp:downstream}).
This result implies that our map of encoders faithfully represents model performance and is practically useful for inferring attributes or performance of a novel encoder.

\section{Related Work}
\label{sec:related_work}

Numerous architectures have been proposed in prior work for creating sentence embeddings such as bi-directional MLM-based ones~\cite{devlin2019bertpretrainingdeepbidirectional, liu2019roberta, song2020mpnetmaskedpermutedpretraining} and autoregressive LLM-based ones~\cite{behnamghader2024llm2vec, wang2024improvingtextembeddingslarge, zhang2025qwen3embeddingadvancingtext}.
Moreover, learning algorithms significantly affect encoder performance~\cite{Gao:2021c, raffel2020exploring}.
Despite the growing complexity of the sentence encoder space, there is no systematic method to empirically quantify the relationships among different sentence encoders.

\citet{yin2018dimensionality} studied the relationships between the dimensionality and quality of \emph{word} embeddings.
They introduced \ac{PIP} matrices to represent the similarity between words, which contain geometric information about the word embedding space and are independent of the dimensionality of the embeddings. 
\citet{bollegala2022learningmetawordembeddings} used \ac{PIP} matrices to compare source embeddings for learning word-level meta-embeddings that aggregate properties from diverse source embeddings to create high-performing meta-embeddings on downstream tasks.
In our case, we use \ac{QRE} to measure the divergence between two encoders using their corresponding \ac{PIP} matrices over \emph{sentence} embeddings.
\citet{kornblith2019similarityneuralnetworkrepresentations} used \ac{PIP} matrices to measure pairwise representations of network layers.
They proposed Centered Kernel Alignment (CKA) by taking the trace (equivalent to sum over all eigenvalues) of the product of two \ac{PIP} matrices.
In contrast, we consider the complete eigenspaces of \ac{PIP} matrices, which better captures the geometry of the embedding spaces.

Quantum NLP has many applications, such as learning word interactions by modeling words in quantum states and representing sentences as density matrices \cite{li-etal-2018-quantum, wu-etal-2021-natural}, or detecting metaphor using density matrices to model the uncertainty and ambiguity of the literal meanings of words \cite{qiao-etal-2024-quantum}.
We represent the embedding space of a sentence encoder as a density matrix, extending this concept to compare sentence encoders.

\citet{umegaki1962conditional} defined a measure of divergence for density matrices by subtracting cross entropy from the negative von Neumann Entropy, which is now known as \ac{QRE}.
By representing encoders as density matrices derived from their \ac{PIP} matrices, we can use \ac{QRE} to measure the divergence between two given encoders.
\citet{De_Domenico_2016} extended the definition of \ac{QRE} to complex networks.
They used \ac{QRE} to measure information loss in network structures and perform multilayer clustering by minimising \ac{QRE}.
In contrast, we represent encoders as feature vectors where the axes correspond to the spectral components in the \ac{QRE} computation.

The most related to our motivation in this paper is the work by~\citet{Oyama:2025}, where they created a map of decoder \acp{LLM}.
Specifically, they proposed the use of log-likelihood scores computed for a fixed set of texts to represent an \ac{LLM}.
Subsequently, a map is created by projecting these log-likelihood vectors using t-SNE.
They proved that the squared Euclidean distance between the centred log-likelihood vectors approximates the \ac{KL} divergence between the corresponding \acp{LLM}.
Note that their map can be applied only for decoder models because it requires scalar log-likelihood scores, while we focus on sentence encoders producing multi-dimensional embeddings.
To the best of our knowledge, we are the first to create a map for sentence encoders.

\section{Mapping Sentence Encoders}
\label{sec:method}

Let us denote a set of $M$ sentence encoders by $\{f_m\}_{m=1}^{M}$, where each encoder $f_m$ embeds a given sentence $s$ to a $d_m$ dimensional vector, $\vec{f}_m(s)$.
Our goal is to represent all encoders in a common feature space that reflect their relationships.
As discussed in \autoref{sec:intro}, mapping sentence encoders is a challenging task due to their (1) multi-variate, (2) misaligned, and (3) varied dimensional embedding spaces. 
To obtain a dimensionality-independent and a shared representation for $f_m$, we first use a fixed set of $N$ sentences\footnote{As discussed in~\autoref{sec:sec:sentence_set}, $N = 10,000$ is sufficient.} $\cS$ ($=\{s_n\}_{n=1}^{N}$), and arrange their embeddings $\{\vec{f}_m(s_n)\}_{n=1}^{N}$ as rows in an embedding matrix $\mat{A}_m \in \R^{N \times d_m}$.
To address the challenges of disparate embedding spaces and mismatched dimensionalities, we use \ac{PIP} matrices that have been successfully used in prior work on meta-embedding learning~\cite{bollegala2022learningmetawordembeddings} to compare different embedding spaces.
\ac{PIP} computes the pairwise semantic similarities\footnote{When the embeddings are $\ell_2$ normalised, \ac{PIP} matrix is equivalent to the cosine similarity matrix between sentences.} among a fixed set of sentences according to a given encoder, which can be seen as capturing the semantic associations according to that encoder.

\subsection{Density Matrices of Encoders}
\label{sec:density_matrix}
The \ac{PIP} matrix of an encoder $f_m$, denoted as $\mat{G}_m$, is defined as the product of the embedding matrix and its transpose as in \eqref{eq:PIP}.
\begin{align}
    \label{eq:PIP}
    \mat{G}_m = \mat{A}_m \mat{A}_m\T \in \R^{N \times N}
\end{align}
\ac{PIP} matrices represent all sentence encoders in the same $N \times N$ real space, independently of their output embedding dimensionalities.
Note that $\mat{G}_m$ is \ac{PSD} with eigenvalues equal to the squared singular values of $\mat{A}_m$, and the left singular vectors of $\vec{A}_m$ become the eigenvectors of $\mat{G}_m$ (see  \autoref{sec:PIP_properties} for the proof).
Moreover, because $d_m \ll N$, \ac{PIP} matrices are often rank-deficient, having only $d_m$ non-zero eigenvalues at most.
Importantly, in practice we can efficiently compute the eigenspace of $\mat{G}_m$ ($N(N-1)/2$ elements) via the \ac{SVD} of $\mat{A}_m$ ($Nd_m$ elements), without having to explicitly compute $\mat{G}_m$, which otherwise requires $\O(N^2)$ memory and $\O(N^3)$ time complexity.

We normalise \ac{PIP} matrices by their trace to compute the corresponding density matrices as in \eqref{eq:density_matrix}.
\begin{align}
    \label{eq:density_matrix}
    \mat{\rho}_m = \frac{\mat{G}_m}{\Tr(\mat{G}_m)}
\end{align}
This normalisation ensures that $\Tr(\mat{\rho}_m)=1$.
As proven in~\autoref{sec:PIP_properties}, dividing a \ac{PIP} matrix by a positive scalar preserves its \ac{PSD} property.
Together, these two properties satisfy the requirements for $\mat{\rho}_m$ to be a density matrix~\cite{nielsen_chuang_2010}.

\subsection{Quantum Relative Entropy}
\label{sec:qre}
Although \ac{KL} divergence is a measure of the difference between two probability distributions~\cite{perez2008kullback}, which is used for the map of decoder LLMs by~\citet{Oyama:2025}, it is not applicable to sentence encoders in our case.
We compare the embedding spaces using density matrices, rather than probability distributions.
\ac{QRE} extends the definition of \ac{KL} divergence from probability distributions to density matrices and is a natural choice for comparing embedding spaces.

Given two embedding matrices $\mat{A}$ and $\mat{B}$, respectively with density matrices $\mat{\rho}$ and $\mat{\sigma}$, the \ac{QRE} of $\mat{\sigma}$ with respect to $\mat{\rho}$ is given by  \eqref{eq:QRE}.
\begin{align}
    \label{eq:QRE}
    S(\mat{\rho} \Vert \mat{\sigma}) = \Tr(\mat{\rho} \ln \mat{\rho}) - \Tr (\mat{\rho} \ln \mat{\sigma})
    % &= \Tr(\mat{\rho}(\ln \mat{\rho} - \ln \mat{\sigma}))
\end{align}
Here, $\ln$ denotes the matrix logarithm.\footnote{The logarithm $\ln \mat{\rho}$ is the matrix logarithm, defined as the scalar logarithm of the eigenvalues of $\mat{\rho}$, and not by the element-wise logarithm of the matrix.}
Because $\mat{\rho}$ and $\mat{\sigma}$ are symmetric \ac{PSD} matrices, they can be diagonalised into spectral representations by the Spectral Theorem~\cite{strang2022introduction} as proven in~\autoref{sec:PIP_properties}.
Therefore, we use eigendecomposition to overcome the numerical instability and the high computational costs involved with power series methods~\cite{higham2008functions} when computing the matrix logarithms in \eqref{eq:QRE}. 
Specifically, we decompose density matrices as in~\eqref{eq:spectral}.
\begin{align}
    \label{eq:spectral}
    \mat{\rho} = \sum_i^{K_{\rho}} \lambda_i \vec{v}_i \vec{v}_i\T, \quad \mat{\sigma} = \sum_j^{K_{\sigma}} \mu_j \vec{u}_j \vec{u}_j\T
\end{align}
where $\{(\lambda_i, \vec{v}_i)\}_{i=1}^{K_{\rho}}$ and $\{(\mu_j, \vec{u}_j)\}_{j=1}^{K_{\sigma}}$ denote the eigenpairs of $\mat{\rho}$ and $\mat{\sigma}$, with $K_{\rho}$ and $K_{\sigma}$ representing their respective ranks. 
The \ac{QRE} can then be expressed using the eigen components as in \eqref{eq:QRE_eigen}.
\par\nobreak
{\small
\vspace{-4mm}
\begin{align}
    \label{eq:QRE_eigen}
    S(\mat{\rho} \Vert \mat{\sigma}) = \sum_i^{K_{\rho}} \lambda_i (\ln \lambda_i) - \sum_i^{K_{\rho}} \lambda_i \left( \sum_j^{K_{\sigma}} \left( \vec{v}_i\T \vec{u}_j \right)^2 \ln \mu_j \right)
\end{align}
}%
Here, subject to the requirement that $\text{for any } i \text{, } \sum_{j=1}^{N} \left( \vec{v}_i\T \vec{u}_j \right)^2 = 1 $.
This requires that the two sets of eigenvectors of $\mat{\rho}$ and $\mat{\sigma}$ form complete orthonormal bases spanning $\R^N$.
See \autoref{sec:QRE-eigen} for the derivation of \eqref{eq:QRE_eigen} from \eqref{eq:QRE}.

In practice, the eigenspaces of two arbitrary encoders rarely align completely or span the same vector space, resulting in a loss of information of the corresponding eigenvector and $\sum_{j=1}^{N} \left| \vec{v}_i\T \vec{u}_j \right|^2 < 1$.
Theoretically, \ac{QRE} goes to infinity when the eigenspace of $\mat{\rho}$ is not contained in the eigenspace of $\mat{\sigma}$~\cite{nielsen_chuang_2010}. 
In our case, to maintain a finite and informative divergence, we treat the misaligned subspace as representing a high degree of divergence values where \ac{QRE} becomes large but finite.
To address this misalignment problem, we use \autoref{th:qre_mass} to approximate \ac{QRE}. 
\begin{theorem}
\label{th:qre_mass}
Let $\mat{\rho}$ be the density matrix of an encoder with non-zero orthonormal eigenpairs $\{(\lambda_i, \vec{v}_i)\}_{i=1}^{K_{\rho}}$. 
Let $\mat{\sigma}$ be the density matrix of a different encoder with non-zero orthonormal eigenpairs $\{(\mu_j, \vec{u}_j)\}_{j=1}^{K_{\sigma}}$. 
By perturbating $\mat{\sigma}$ with a small noise parameter $\epsilon > 0$ on its null space\footnote{In our experiments in~\autoref{sec:results}, we set $\epsilon = e^{-12}$.}, we can
approximate \ac{QRE} as follows:
\begin{align}
    \label{eq:qre-mass-proof}
    S(\mat{\rho} \Vert \mat{\sigma}_\epsilon) = \sum_{i=1}^{K_{\rho}} \lambda_i (\ln \lambda_i) - \sum_{i=1}^{K_{\rho}} \lambda_i (\mathcal{C}_i  + r_i \ln \epsilon)
\end{align}
where $c_i = \sum_{j=1}^{K_{\sigma}} (\vec{v}_i\T \vec{u}_j)^2$ is the captured mass, $r_i = 1 - c_i$ is the residual mass, and $\mathcal{C}_i = \sum_{j=1}^{K_{\sigma}} (\vec{v}_i\T \vec{u}_j)^2 \ln \mu_j$ is the cross-entropy contribution from the aligned subspace.
\end{theorem}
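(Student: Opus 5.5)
We define the perturbation explicitly and then compute the QRE through the spectral formula \eqref{eq:QRE_eigen}. Complete the orthonormal eigenvectors $\{\vec{u}_j\}_{j=1}^{K_{\sigma}}$ of $\mat{\sigma}$ to an orthonormal basis $\{\vec{u}_j\}_{j=1}^{N}$ of $\R^N$. The added vectors span the null space of $\mat{\sigma}$. Set
\begin{align*}
    \mat{\sigma}_\epsilon = \sum_{j=1}^{K_{\sigma}} \mu_j \vec{u}_j \vec{u}_j\T + \epsilon \sum_{j=K_{\sigma}+1}^{N} \vec{u}_j \vec{u}_j\T .
\end{align*}
This matrix is symmetric and positive definite, with eigenpairs $(\mu_j,\vec{u}_j)$ for $j\le K_{\sigma}$ and $(\epsilon,\vec{u}_j)$ for $j>K_{\sigma}$. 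Hence $\ln \mat{\sigma}_\epsilon = \sum_{j=1}^{N} \ln(\mu_j^{\epsilon})\, \vec{u}_j\vec{u}_j\T$ is well defined and finite, with $\mu_j^\epsilon=\mu_j$ or $\epsilon$ accordingly. Its trace is $1+(N-K_{\sigma})\epsilon$. One can either renormalise or, as is implicit in the statement, accept this $\O(\epsilon)$ deviation from unit trace. I will note that the formula is exact for the unnormalised $\mat{\sigma}_\epsilon$ and an approximation up to $\O(\epsilon)$ otherwise.

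Next I evaluate the two trace terms. The first, $\Tr(\mat{\rho}\ln\mat{\rho})$, depends only on the nonzero spectrum of $\mat{\rho}$. With the convention $0\ln 0=0$, it equals $\sum_{i=1}^{K_{\rho}}\lambda_i\ln\lambda_i$. For the cross term, I substitute the spectral forms into $\Tr(\mat{\rho}\ln\mat{\sigma}_\epsilon)$ and use cyclicity of the trace. This gives $\sum_{i=1}^{K_\rho}\lambda_i \sum_{j=1}^{N}(\vec{v}_i\T\vec{u}_j)^2\ln\mu_j^\epsilon$, which is the derivation of \eqref{eq:QRE_eigen} in \autoref{sec:QRE-eigen}. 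That derivation applies here because $\{\vec{u}_j\}_{j=1}^N$ is now a complete orthonormal basis. I then split the inner sum at $K_{\sigma}$. The first $K_{\sigma}$ terms give exactly $\mathcal{C}_i$. The remaining terms give $\ln\epsilon\sum_{j>K_{\sigma}}(\vec{v}_i\T\vec{u}_j)^2$.

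The remaining step is the identification $\sum_{j>K_\sigma}(\vec{v}_i\T\vec{u}_j)^2 = r_i$. I plan to obtain it from Parseval's identity. Since $\vec{v}_i$ is a unit vector and $\{\vec{u}_j\}_{j=1}^N$ is an orthonormal basis, $\sum_{j=1}^N(\vec{v}_i\T\vec{u}_j)^2=\|\vec{v}_i\|^2=1$. Subtracting the captured mass $c_i$ gives $1-c_i=r_i$. In particular, the residual mass does not depend on which completion of the basis is chosen. It is the squared norm of the projection of $\vec{v}_i$ onto the null space of $\mat{\sigma}$, so the statement is basis-independent. Combining these pieces yields \eqref{eq:qre-mass-proof}.

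The main obstacle is conceptual rather than computational: specifying precisely in what sense $\mat{\sigma}_\epsilon$ is a density matrix and how the result approximates the true QRE. I will address this in two ways. First, I will state that the identity holds exactly for the perturbed operator defined above. Second, I will remark that if $\mat{\sigma}_\epsilon$ is trace-normalised, every $\ln\mu_j^\epsilon$ shifts by $-\ln(1+(N-K_\sigma)\epsilon)=\O(\epsilon)$, and because $\sum_i\lambda_i=1$ this changes $S$ only by $\O(\epsilon)$. Finally, I will note that as $\epsilon\to0$ the term $-\ln\epsilon\sum_i\lambda_i r_i$ diverges exactly when some $r_i>0$. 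This recovers the known infinite QRE when the support of $\mat{\rho}$ is not contained in that of $\mat{\sigma}$.
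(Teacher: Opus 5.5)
Your proof is correct and follows the paper's argument. It uses the same null-space perturbation $\sigma_\epsilon$ and the same spectral expansion of $\mathrm{Tr}(\rho\ln\sigma_\epsilon)$, split at $K_\sigma$ into the aligned term $\mathcal{C}_i$ and a null-space term whose weight Parseval's identity turns into $r_i=1-c_i$. Completing $\{u_j\}$ to an orthonormal basis of all of $\mathbb{R}^N$ is cleaner than the paper's version, which truncates the noise components at index $K_\rho$ and so implicitly needs $K_\rho=N$ (the unit base encoder case); your trace-normalisation and $\epsilon\to 0$ remarks are useful additions the paper omits.
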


\autoref{th:qre_mass} enables the computation of \ac{QRE} values even when the eigenspaces are misaligned.
When the captured mass for a specific eigenvector of the encoder represented by $\mat{\rho}$ is less than 1, we quantify the missing information in this direction of the \ac{QRE} measurement using $\epsilon$.
In this situation, the contribution $- r_i \lambda_i\ln \epsilon$ to the \ac{QRE} tends to be large, where $r_i$ is the residual mass and $-\lambda_i \ln \epsilon$ represents the cross entropy between the eigenvector of the first encoder ($\mat{\rho}$) and the misaligned subspace of the second encoder ($\mat{\sigma}$).
See proof of \autoref{th:qre_mass} in \autoref{sec:QRE-mass}.

\begin{figure}[t!]
    \centering

    \includegraphics[width=0.8\linewidth]{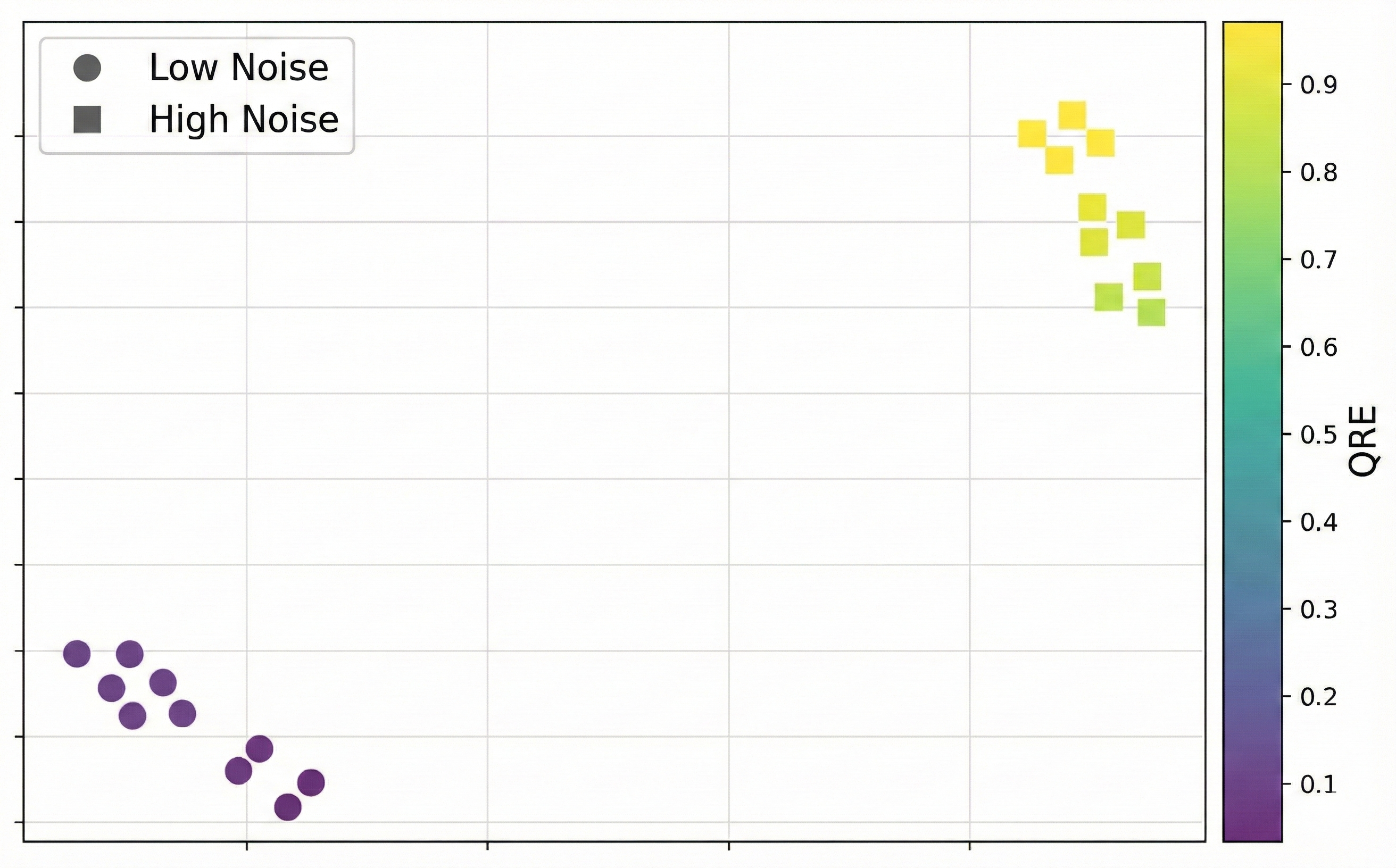}
     
    \caption{An illustrative example of the visualisation of \ac{QRE}-based feature vectors. We use the unit base encoder with 10,000 basis vectors and generate two groups of synthetic embeddings with low noise ($\sigma^2 \in [0,1]$) and high noise ($\sigma^2 \in [3, 4]$), respectively sampled from the normal distribution, $\mathcal{N}(0, \sigma^2 \mathbf{I})$. Each group has 10 embedding matrices. We use our proposed \ac{QRE} method to compute the feature vectors and visualise them using t-SNE. This example clearly shows that the low- and high-noise groups are separated by distinguishable \ac{QRE} values (sum of feature vectors), validating the faithfulness of our method. See \autoref{sec:illu} for details.}
    \label{fig:syn_map}
\end{figure}

\subsection{Representing Sentence Encoders}
\label{sec:encoder_embedding}

Recall that \ac{QRE} measures the divergence of an encoder relative to another encoder, which is problematic for two reasons:
(1) the pairwise computation of relative divergences between all encoders ($\cO(M^2)$) is expensive, and
(2) there is no shared feature representation common to all encoders.
To address this, we define a \emph{unit base encoder} $f_0$, an imaginary encoder that has a density matrix $\mat{\rho}_0$ where its eigenvectors form a complete orthonormal basis of $\R^{N}$.
% with unit basis vectors and all eigenvalues equal to $\frac{1}{N}$.
Note that we do not explicitly require the embedding (PIP or density) matrices for computing \ac{QRE} using~\autoref{sec:qre}, but only require the eigenspace of its density matrix.
We use the unit base encoder to represent each \emph{target encoder} as a \emph{feature vector} in a shared space.

Let $\{(\lambda_i, \vec{e}_i)\}_{i=1}^{N}$ denote the eigenpairs of the unit base encoder, where $\vec{e}_i$ is a unit basis vector with $1$ at the $i$-th dimension and $0$ elsewhere, and all $\lambda_i = \frac{1}{N}$.
This ensures that the eigenspace of the unit base encoder is \emph{isotropic}.
Additionally, this indicates that our unit base encoder corresponds to the eigendecomposition of the density matrix $\mat{\rho}_{0} = \frac{1}{N} \mat{I}$, where $\mat{I} \in \R^{N \times N}$ is the identity matrix.
Therefore, we can decompose the \ac{QRE} of the target encoder with respect to the unit base encoder into a feature vector, where each dimension corresponds to the divergence contribution measured along the specific basis vector. 

Specifically, we represent the $m$-th target encoder $f_m$ as an $N$-dimensional feature vector $\vec{\phi}_m$ as follows.
Let the density matrix corresponding to $f_m$ be $\mat{\rho}_m$ and its eigenpairs be $\{(\mu_j, \vec{u}_j)\}_{j=1}^{d_m}$.
The $w$-th dimension ($w \in [1, N]$) of $\vec{\phi}_m$ is denoted by $\phi_{m,w}$ and represents the contribution by the $w$-th eigen component of $\mat{\rho}_{0}$ to the calculation of \ac{QRE} in \eqref{eq:qre-mass-proof}.
Concretely, $\phi_{m,w}$ is given by \eqref{eq:axis_ele}.
\begin{align}
    \label{eq:axis_ele}
     \phi_{m,w} = \lambda_w \ln \lambda_w - \lambda_w (\mathcal{C}_w + r_w \ln \epsilon)
\end{align}
By construction, the sum of the elements in the feature vector $\vec{\phi}_m$ is equal to \ac{QRE} of the target encoder with respect to  the unit base encoder as expressed by \eqref{eq:feat-sum}.
\begin{align}
    \label{eq:feat-sum}
    \sum_{w=1}^N \phi_{m,w} = S(\mat{\rho}_{0} \Vert \mat{\rho}_m)
\end{align}

Note that we do \emph{not} select any single encoder as the comparison point, which makes the unit base encoder \emph{unbiased} and independent from the set of encoders that must be mapped.
Moreover, as explained above, all target encoders are represented in the same shared $N$-dimensional feature space.
These desirable properties guarantee that any novel encoder can be represented in the same map coordinates, without affecting the positions of the existing encoders.

\subsection{Projection onto a 2D Map}
\label{sec:t-sne}

Following the prior work on mapping decoder \acp{LLM} by~\citet{Oyama:2025}, we use \href{https://scikit-learn.org/stable/modules/generated/sklearn.manifold.TSNE.html}{t-SNE} to visualise the map of encoders.
t-SNE creates a map by capturing the implicit structures of high-dimensional data with low-dimensional manifolds, showing both global and local relationships in a two-dimensional map and is extensively used for data visualisation~\cite{li2016visualizing, gonzalez2022two}. 

As shown by \eqref{eq:feat-sum}, \ac{QRE} of a target encoder is given by the sum of elements in the corresponding feature vector.
Empirically, all feature values are positive due to the residual mass correction and the $\ell_1$ norm of $\vec{\phi}_m$,  $\norm{\vec{\phi}_m}_1$, is equal to $S(\rho_{0}||\rho_m)$ in practice.
Moreover, the $\ell_1$ distance between two target encoder feature vectors reflects the accumulated difference in cross-entropy arising from the distinct alignment of each encoder with the unit base encoder (see~\autoref{sec:feat-space} for proof).
Therefore, we use the $\ell_1$ distance measured between encoder feature vectors as the distance metric for finding nearest neighbours (in~\autoref{sec:results}) and for the t-SNE projections.

\autoref{fig:syn_map} shows the result of applying our proposed method to synthetic embeddings.
We see that the map perfectly separates the two groups of synthetic embeddings, thus accurately reflecting the inter- and intra-group relationships.

\section{Experiments}
\label{sec:exp}

The creation of feature vectors for the map of encoders requires a set of encoders $\{f_m\}_{m=1}^{M}$ and a set of sentences $\cS$ for creating the embedding matrices and their selection is described next.

\subsection{Selection of Sentence Encoders}
\label{sec:encoder_selection}

We use Hugging Face Hub to select sentence encoders that are compatible with the \href{https://huggingface.co/sentence-transformers}{sentence-transformers} library~\cite{huggingface_sentence_transformers}.
Adopting the model selection strategy by \citet{Oyama:2025}, we first select the most-downloaded 2000 sentence encoders (over the past 30 days, as of November 27, 2025).
Subsequently, we exclude the sentence encoders that cannot be correctly loaded using the \href{https://huggingface.co/sentence-transformers}{SentenceTransformer} class (see detailed discussion in~\autoref{sec:limitations}).
This filtering process produces a final set of 1101 sentence encoders for the map (see the full model list in \autoref{sec:full_model_list}).

\begin{figure}[t!]
    \centering
    
    % --- First Subfigure ---
    \begin{subfigure}[b]{\linewidth}
        \centering
        \includegraphics[width=\linewidth]{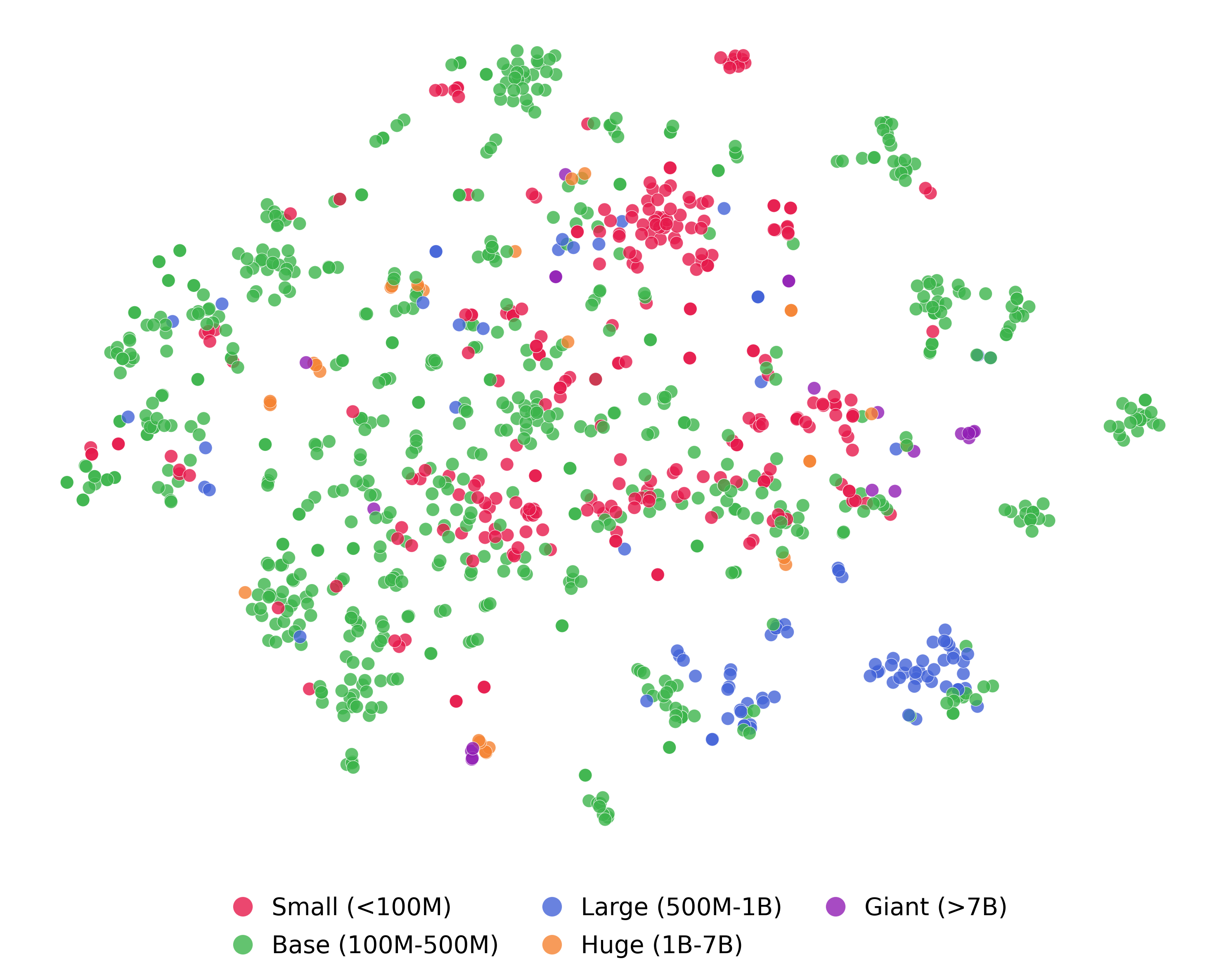}
        \caption{Encoder Parameter Size}
        \label{fig:map_size}
    \end{subfigure}
    
    \vspace{1em} % Adds space between figures
    
    % --- Second Subfigure ---
    \begin{subfigure}[b]{\linewidth}
        \centering
        \includegraphics[width=\linewidth]{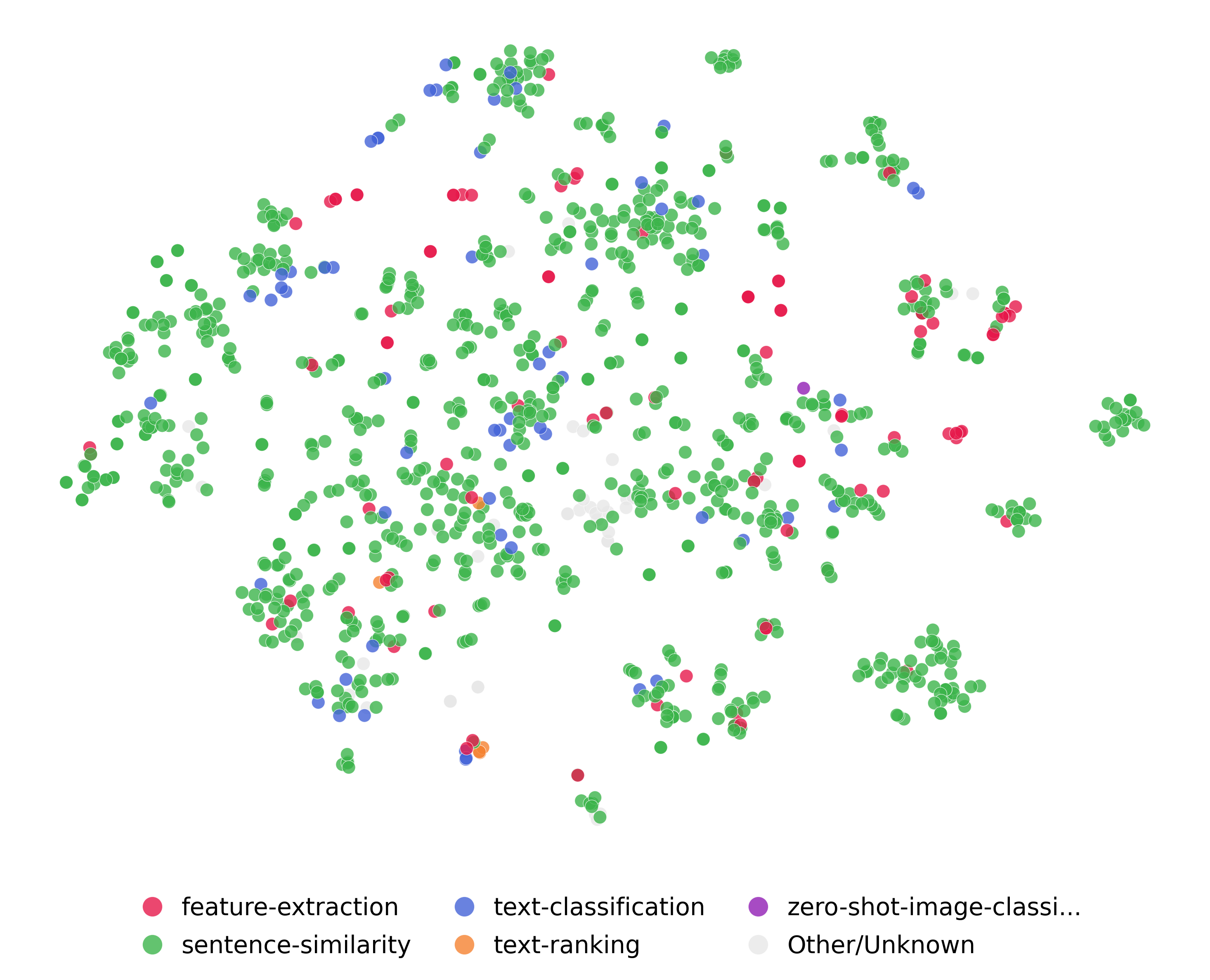}
        \caption{Training Task}
        \label{fig:map_task}
    \end{subfigure}
    
    \vspace{1em} % Adds space between figures
    
    % --- Third Subfigure ---
    \begin{subfigure}[b]{\linewidth}
        \centering
        \includegraphics[width=\linewidth]{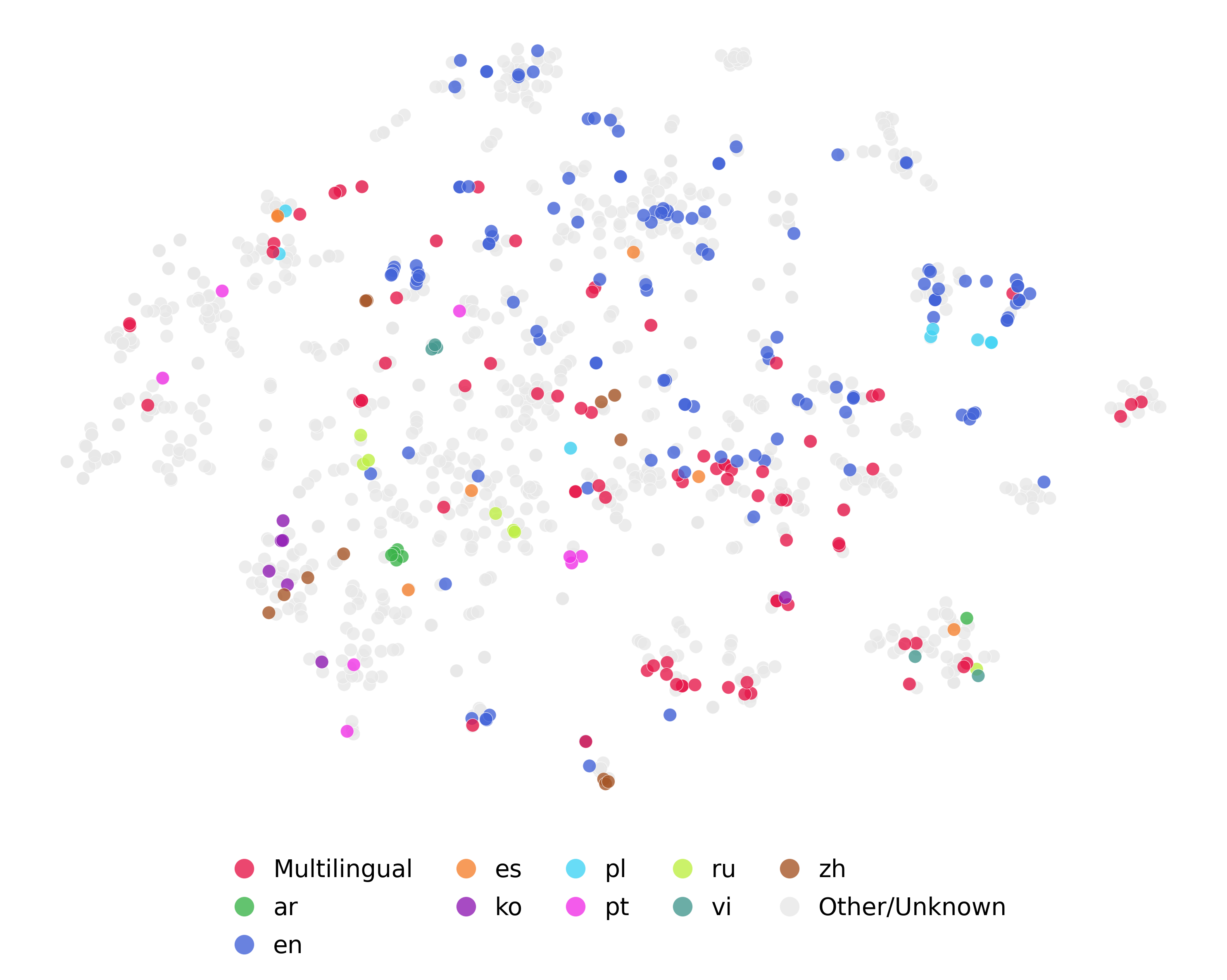}
        \caption{Pre-trained Language}
        \label{fig:map_language}
    \end{subfigure}

    % --- Main Caption and Label ---
    \caption{Maps by attributes.}
    \label{fig:maps_by_categories}
\end{figure}

\subsection{Selection of Sentence Set}
\label{sec:sec:sentence_set}

We randomly selected 10,000 high-quality and diverse sentences from a curated  \href{https://huggingface.co/datasets/agentlans/high-quality-english-sentences}{high-quality-English-sentences} \cite{agentlans2024highquality} dataset derived from the C4 \cite{raffel2023exploringlimitstransferlearning} and FineWeb \cite{penedo2024finewebdatasetsdecantingweb}.
This dataset is filtered by a text-quality assessment classifier, \href{https://huggingface.co/agentlans/deberta-v3-xsmall-quality}{deberta-v3-xsmall-quality}, to select a high quality set of sentences, covering a wide range of topics.
As discussed in~\autoref{sec:select_10000}, selecting 10,000 sentences is adequate to accurately represent all sentence encoders.

\begin{table}[t!]
\centering
\small
\resizebox{\columnwidth}{!}{%
\begin{tabular}{l c}
\toprule
\textbf{Encoder Name} & \textbf{$\ell_1$} \\
\midrule
\multicolumn{2}{l}{\textbf{Target: MPNet}} \\
 optimum-intel-internal-testing/all-mpnet-base-v2 & 0.0 \\
 flax-sentence-embeddings/all\_datasets\_v4\_mpnet-base & $3.22 e^{-4}$ \\
 spartan8806/atles-champion-embedding & 0.0429 \\
 wwydmanski/all-mpnet-base-v2-legal-v0.1 & 0.0440 \\
 hojzas/setfit-proj8-all-mpnet-base-v2 & 0.0628 \\
\midrule
\multicolumn{2}{l}{\textbf{Target: Multi-MPNet}} \\
 meedan/paraphrase-filipino-mpnet-base-v2 & 0.0398 \\
 sdadas/st-polish-paraphrase-from-mpnet & 0.0746 \\
 shtilev/medical\_embedded\_v5 & 0.0994 \\
 s-t/paraphrase-mpnet-base-v2 & 0.1083 \\
 projecte-aina/ST-NLI-ca\_paraphrase-multilingual-mpnet-base & 0.1087 \\
\midrule
\multicolumn{2}{l}{\textbf{Target: GIST-v0}} \\
 OrcaDB/gte-small & 0.0494 \\
 embaas/s-t-gte-small & 0.0494 \\
 thenlper/gte-small & 0.0494 \\
 MoralHazard/NSFW-GIST-small & 0.0698 \\
 JALLAJ/5epo & 0.0705 \\
\midrule
\multicolumn{2}{l}{\textbf{Target: bge-large-en}} \\
 katanemo/bge-large-en-v1.5 & 0.0 \\
 llmrails/ember-v1 & 0.0465 \\
 WhereIsAI/UAE-Large-V1 & 0.1284 \\
 ls-da3m0ns/bge\_large\_medical & 0.1290 \\
 OrcaDB/mxbai-large & 0.1447 \\
\bottomrule
\end{tabular}
}
\caption{Nearest neighbours for encoders sorted in ascending order of pairwise $\ell_1$ values. 
s-t denotes \textit{sentence-transformers}. 0.0 indicates the value is effectively zero at 12 decimal places.}
\label{tab:nearest_neighbors}
\end{table}

\section{Results}
\label{sec:results}

\subsection{Nearest Neighbours}
\label{sec:NNs}

We investigate the nearest neighbours of the encoders to assess whether our feature vectors can accurately capture the relationships between encoders.
We hypothesise that encoders derived from the same primary architectures would exhibit smaller pairwise $\ell_1$ values and cluster together within our map.
A zoomed-in map at the bottom of \autoref{fig:map-of-encoders} visualises the neighbours of the encoders \textit{bge-large} and \textit{bge-base} as an illustrative example, highlighting the proximity of neighbours with the same primary architecture in the map.

\autoref{tab:nearest_neighbors} shows the top 5 nearest neighbours in the ascending order of pairwise $\ell_1$ values for 4 randomly-selected target encoders (\href{https://huggingface.co/sentence-transformers/all-mpnet-base-v2}{MNPet}, \href{https://huggingface.co/sentence-transformers/paraphrase-multilingual-mpnet-base-v2}{Multi-MPNet}, \href{https://huggingface.co/avsolatorio/GIST-small-Embedding-v0}{GIST-v0}, \href{https://huggingface.co/BAAI/bge-large-en-v1.5}{bge-large-en}) (see the full list of nearest neighbours in~\autoref{sec:NNs_full_table}).
For all encoders, we inspect their model cards in Hugging Face Hub and have confirmed that the nearest neighbours are indeed fine-tuned versions of the corresponding target models.
In particular, from \autoref{tab:nearest_neighbors} we see that the encoders sharing the same name are fine-tuned from the target encoders.

For \textbf{MPNet}, \textit{all-mpnet-base-v2-legal-v0.1} is fine-tuned from MPNet on a legal dataset.
\textit{atles-champion-embedding} is also fine-tuned from MPNet on Semantic Textual Similarity (STS) tasks.
For \textbf{Multi-MPNet}, the neighbour \textit{medical\_embedded\_v5} is fine-tuned from Multi-MPNet on medical and clinical texts.
For \textbf{GIST-v0}, the neighbour, \textit{5epo}, is fine-tuned from \textit{BAAI/bge-small-en-v1.5}, where GIST-v0 is fine-tuned from the same encoder on a medical dataset.
The three neighbours \textit{OrcaDB/gte-small}, \textit{embaas/s-t-gte-small}, and \textit{thenlper/gte-small} have the same $\ell_1$ values, indicating that they are highly close to each other and possibly copies of the same encoder.
The \textit{gte-small} encoders share similarities in architectures with the \textit{bge-small} encoders, as both are constructed on small bert encoders with contrastive learning.
Finally, for \textbf{bge-large-en}, the nearest neighbours are copied or directly fine-tuned from bge-large encoders with additional curated datasets and training strategies.
We further investigate the hierarchical clustering of the nearest neighbours in~\autoref{sec:dendrogram_NN}, with different groups of neighbours clearly identifiable on the map.
This shows that our map captures both the local and global relationships between the encoders.

In conclusion, our method accurately represents encoders reflecting their close relationships, which might not be obvious from the encoder names alone, but are correctly captured in the encoder feature vectors and the map created therewith.

\subsection{Analysis of the Map}

We observe grouping patterns when the map is coloured by different attributes such as parameter size and encoder types (primary architecture of the model).\footnote{We extract six attributes from \href{https://huggingface.co/docs/huggingface_hub/main/en/package_reference/hf_api}{Hugging Face API}: encoder type, parameter size, training task, pre-trained language, training dataset, and dimensionality.}
\autoref{fig:map-of-encoders} visualises the encoders by encoder type.
Generally, the bert encoders are pervasive in the map, reflecting the wide usage of bert in sentence encoders.
The xlm-roberta encoders tend to group in the bottom right and middle left areas, while the mpnet encoders are close together at the top of the map.
The roberta and distilbert encoders are close to bert clusters, which is reasonable due to architectural similarity.
There are also clear clusters for specific encoder types, such as gemma3\_text and model2vec encoders.

From the map showing the encoder parameter sizes (\autoref{fig:map_size}), we see that encoders with smaller parameter sizes ($<$100M) tend to cluster in the middle, whereas encoders with larger parameter sizes (500M-1B) cluster on the bottom right side.
Encoders with parameter sizes ($>$1B) tend to be scattered across the map, instead of forming tight clusters.
This might be because smaller models are often distilled or fine-tuned from larger foundation models, making them similar to each other on the map.
For example, a closer look at the map reveals that the Qwen series encoders with different parameter sizes are closer to each other on the map (e.g. \href{https://huggingface.co/Qwen/Qwen3-Embedding-8B}{8B}, \href{https://huggingface.co/Qwen/Qwen3-Embedding-4B}{4B} and \href{https://huggingface.co/Qwen/Qwen3-Embedding-0.6B}{0.6B}).

From the map by the fine-tuning tasks (\autoref{fig:map_task}), we see that encoders are mostly fine-tuned on the sentence similarity tasks, a standard fine-tuning task for sentence encoders~\cite{reimers-2019-sentence-bert}, showing its prevalence in the map.
Feature extraction and text classification are also commonly used tasks for fine-tuning encoders~\cite{devlin-etal-2019-bert, wolf-etal-2020-transformers}.
Although they are not as prevalent as sentence similarity, they are also spread throughout the map.

Considering pre-trained languages (\autoref{fig:map_language}), encoders trained only on English (en) are mostly located on the top and right side of the map, whereas encoders trained on multilingual datasets are located at the centre and lower-right areas of the map.
Note that most of the multilingual datasets contain English.
Monolingual encoders (except English) are located mostly at the bottom of the map, such as for Arabic (ar) and Korean (ko). 
See \autoref{sec:more_map_class} for maps coloured by the dimensionality of the embeddings and the training datasets.

We further plot a dendrogram to visualise the hierarchical relationships between the top 100 most-downloaded encoders in~\autoref{fig:dendrogram_top_100}.
We see that encoders with exact/derived architectures are closely located (shown in the same colour), which is consistent with our map by encoder type (\autoref{fig:map-of-encoders}).

\begin{figure}[t!]
    \centering
    \includegraphics[width=1\linewidth]{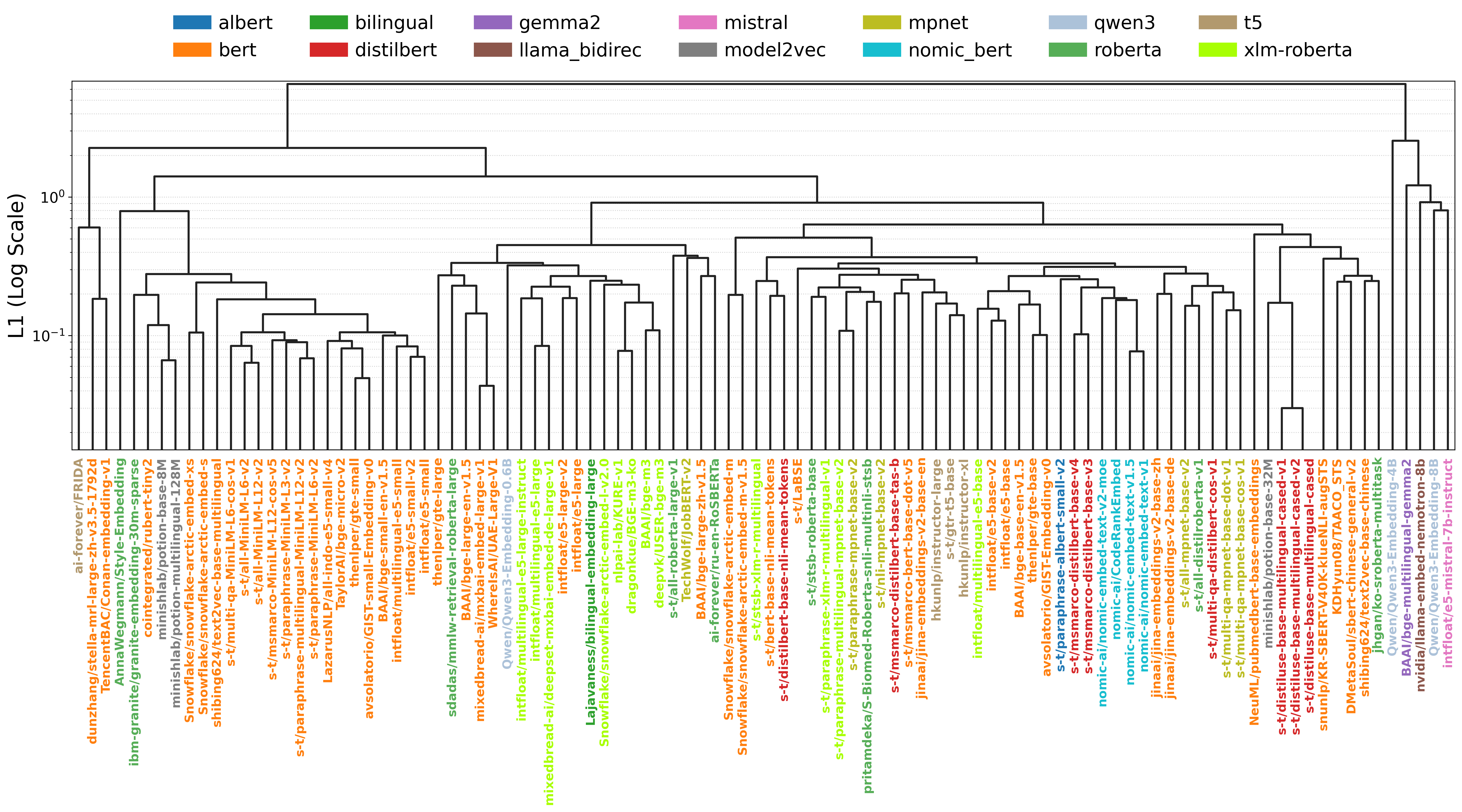}
    \caption{Hierarchical clustering of the top 100 downloaded encoders, coloured by encoder type. $\ell_1$ values are reported in log scale for better visualisation. A zoomed-in version shown in~\autoref{sec:zoomin_den}.}
    \label{fig:dendrogram_top_100}
\end{figure}

\begin{table}[t!]
\centering
\small
\begin{tabular}{l c c}
\toprule
\textbf{Task} & \textbf{Spearman} & \textbf{Pearson} \\
\midrule
BiorxivClusteringS2S & 0.876 & 0.837 \\
ArxivClusteringS2S & 0.874 & 0.855 \\
ArxivClusteringP2P & 0.862 & 0.833 \\
SciDocsRR & 0.861 & 0.820 \\
SciFact & 0.861 & 0.852 \\
HotpotQA & 0.860 & 0.816 \\
NFCorpus & 0.857 & 0.816 \\
RedditClusteringP2P & 0.855 & 0.838 \\
BiorxivClusteringP2P & 0.853 & 0.845 \\
CQADupstackPhysicsRetrieval & 0.848 & 0.797 \\
\bottomrule
\end{tabular}
\caption{Average Spearman/Pearson correlations between the true and predicted task performance of selected tasks, sorted by descending Spearman.}
\label{tab:individual_task_performance}
\end{table}

\begin{table}[t!]
\centering
\small
\begin{tabular}{l c c c}
\toprule
\textbf{Task Type} & \textbf{Spearman} & \textbf{Pearson} & \textbf{\# Datasets} \\
\midrule
Clustering & 0.827 & 0.826 & 11 \\
Reranking & 0.792 & 0.758 & 4 \\
Retrieval & 0.785 & 0.759 & 26 \\
PairClassification & 0.702 & 0.678 & 3 \\
STS & 0.538 & 0.459 & 10 \\
Classification & 0.473 & 0.438 & 12 \\
\bottomrule
\end{tabular}
\caption{Average Spearman/Pearson correlations between the true and predicted task performance by task type, sorted by descending Spearman correlation.}
\label{tab:task_type_spearman}
\end{table}

\begin{figure}[t!]
    \centering    
    \includegraphics[width=0.75\linewidth]{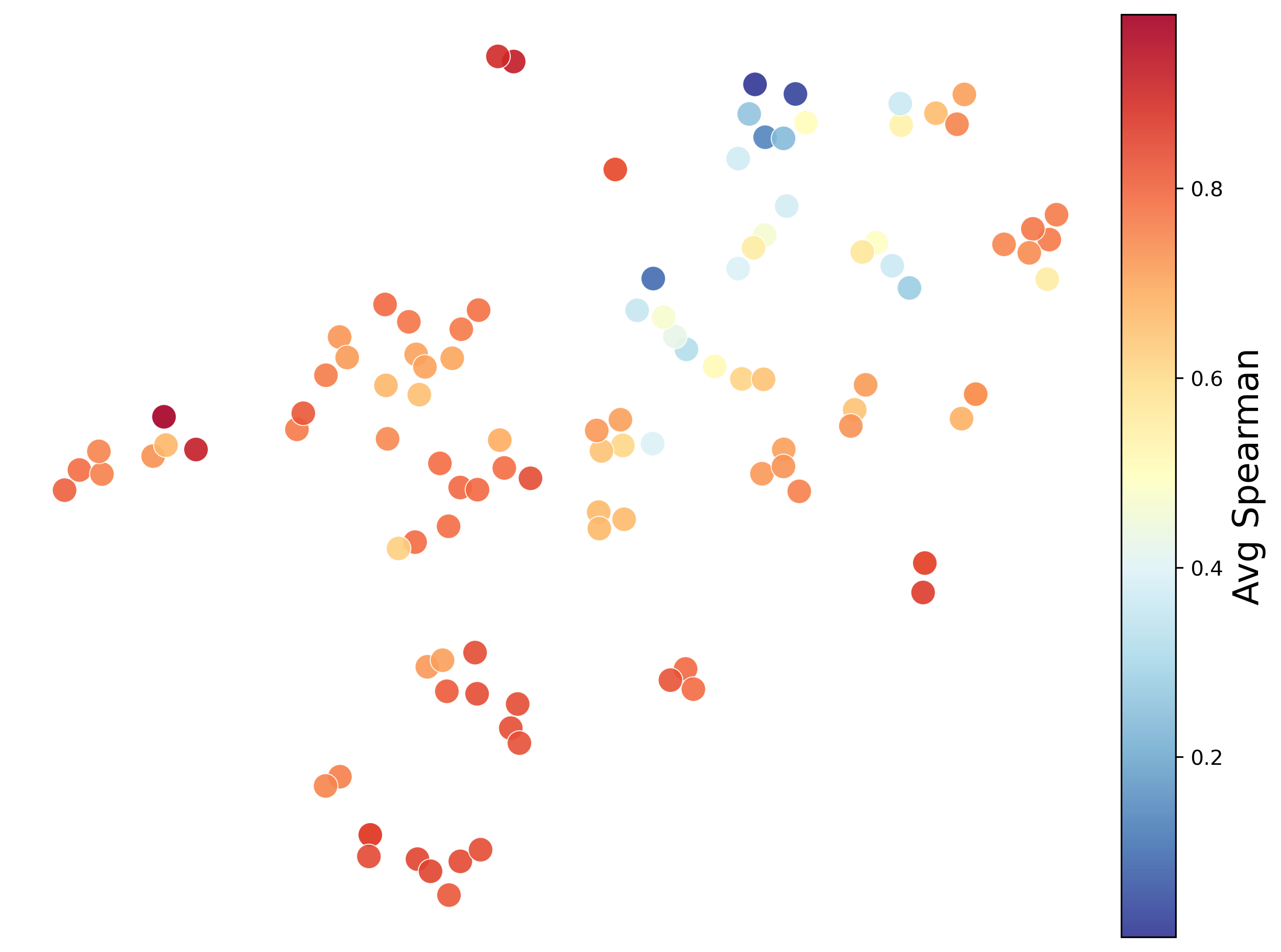}
    \caption{Submap of 112 encoders coloured by the average Spearman correlation between the true and predicted task performance of selected tasks.}
    \label{fig:submaps}
\end{figure}

\subsection{Correlation with Downstream Tasks}
\label{sec:exp:downstream}

A map that correlates well with the downstream task performance of the encoders is desirable because it shows that the map captures the true performance of encoders and can potentially be used to infer the performance of a novel encoder.
For this purpose, we measure the correlation between the \emph{true} performance of an encoder and its performance \emph{predicted} from the feature vector of that encoder.
We use official results from the MTEB leader-board as the true encoder performance instead of re-evaluating encoders by ourselves for reproducibility and computational feasibility.\footnote{Re-evaluating all encoders on MTEB requires substantial computational resources and many weeks.}
Out of all our encoders, 112 appear on the MTEB leader-board (see encoder list in~\autoref{sec:full_encoder_list_MTEB} and their results in~\autoref{sec:full_MTEB}), which provide a sufficiently large sample to test for statistical significance when predicting the task performance of encoders from their feature vectors.
We first use \href{https://scikit-learn.org/stable/modules/generated/sklearn.decomposition.PCA.html}{PCA} to project our feature vectors from 10,000 to a 50-dimensional space to mitigate any overfitting.
Next, we use \href{https://scikit-learn.org/stable/modules/generated/sklearn.linear_model.ElasticNetCV.html}{ElasticNetCV} with 5-fold cross-validation to predict the downstream task performance of the selected encoders using their (low-dimensional) feature vectors (see~\autoref{sec:impletation_details_mteb} for implementation details).

\autoref{tab:individual_task_performance} shows the performance for individual tasks.
Our encoder feature vectors report significantly high Spearman and Pearson correlations ($>$0.8) ($p\text{-value}<$0.05) in multiple tasks, indicating their relatedness to downstream task performance.
We further create submaps for the 112 encoders by their average performance on the selected 10 tasks in~\autoref{tab:individual_task_performance}.\footnote{We use min-max normalisation for each task's performance and then compute their average.}
As shown in \autoref{fig:submaps}, our map reflects the average Spearman correlation between the true and predicted MTEB performance for encoders in general, which is demonstrated by the proximity of encoders with similar levels of average Spearman correlation.
For example, encoders with higher Spearman correlation (red/orange) are grouped at the bottom-left and middle-left areas of the map, while encoders with lower Spearman correlation (yellow/blue) are grouped at the upper-middle area.

Feature vectors are especially connected well with task types such as reranking, clustering and retrieval, as shown in \autoref{tab:task_type_spearman}.
In contrast, classification tasks show weaker correlations.
We believe this is because they require training a classification heads using class labelled training instances, which are not available to the encoders, hence not captured in the encoder feature vectors.
STS tasks report a moderate Spearman Correlation.
Notably, our feature vectors have a high Spearman correlation $\geq$0.75 on the STS16 and STS13 task.

\section{Conclusion}
We propose a \ac{QRE}-based method to represent diverse sentence encoders as feature vectors in a shared space, creating a map of sentence encoders.
Our map has well-founded properties and accurately reflects relationships between encoders, which are verified both theoretically and empirically.
Moreover, our map correlates well with downstream task performance of encoders.

\section{Limitations}
\label{sec:limitations}

Our \ac{QRE}-based method to create feature vectors for encoders and visualise them in a shared two-dimensional space can be applied to any sentence encoder and any sentence set, which is not specific to English.
However, our selected sentence set to create embeddings for encoders as representations of the embedding spaces only covers English, which is a morphologically limited language.
Our map is applicable to a multilingual sentence set, while the quality of the maps is based on the ability of the sentence encoders to accurately capture the semantic meaning of the multilingual sentences, which is out of our scope.

We select a set of 10,000 sentences.
A larger sentence set (e.g. 100,000 sentences) can be selected to represent the semantic spaces of sentence encoders.
However, in~\autoref{sec:select_10000}, our experimental results have shown that 10,000 sentences perform well enough to create high-quality maps.
Additionally, our \ac{QRE} method is scalable to larger sentence sets, because our method does not need to compute \ac{PIP} matrices explicitly, which instead uses \ac{SVD} of the embedding matrices as detailed in~\autoref{sec:method}.

In the selected set of sentence encoders, we contain the multilingual sentence encoders and map them successfully based on the English set.
The effects of non-English sentence sets for such multilingual encoders in our map remain to be further investigated in future work.

At the time of writing, there are 17,515 sentence encoders in the Hugging Face Hub, we only select 2000 of them based on the most downloaded records, due to computational resource constraints.
Among the 2000 selected encoders, we further filter out those encoders that cannot be loaded or not correctly used by the \href{https://huggingface.co/sentence-transformers}{SentenceTransformer} class such as cross encoders, as explained in~\autoref{sec:encoder_selection}.
Cross encoders require sentence pairs (query, document) as input, while we embed one sentence at a time.
Therefore, extending our method to heterogeneous encoders of diverse architectures could further reveal the relationships between sentence encoders.

Our selection strategy for sentence encoders in~\autoref{sec:encoder_selection} is based on the number of downloads, introducing a popularity bias.
This strategy tends to include fewer encoders that are trained on low-resource languages.
Therefore, it would be important to particularly consider low-resource encoders, facilitating a fair and linguistically diverse analysis.

\section{Ethical Considerations}
We do not annotate or release any datasets in this project.
To the best of our knowledge, there are no ethical issues raised regarding the high-quality-english-sentences dataset (\autoref{sec:sec:sentence_set}) we used.
However, it has been reported that unfair social biases are found in some MTEB tasks such as STS~\cite{webster2021measuring}.
Although we do not train any sentence encoders, it has been reported that both MLM-based and LLM-based sentence encoders contain various social biases~\cite{may2019measuring, lin2025investigating}.
We have not evaluated the social biases of the sentence encoders and their downstream implications.
Therefore, we consider it to be important to measure social biases before integrating any encoder into the map.

\bibliography{myrefs.bib}

\appendix

\section{Eigenvalue Properties of the PIP Matrix}
\label{sec:PIP_properties}

\begin{theorem}
Let $\mat{A} \in \mathbb{R}^{N \times d}$ be a real matrix with $N \geq d$. Let $\sigma_1, \sigma_2, \dots, \sigma_d$ denote the eigenvalues of $\mat{A}$. Let $\mat{B} = \mat{A}\mat{A}\T$. Let $\lambda_1, \lambda_2, \dots, \lambda_N$ denote the eigenvalues of $\mat{B}$. The following properties hold:
\begin{enumerate}
    \item[(a)] All eigenvalues $\lambda_i$ are real and non-negative ($\lambda_i \geq 0$).
    \item[(b)] The first $d$ eigenvalues are equal to the squared singular values of $A$ (i.e. $\lambda_i = \sigma_i^2$ for $1 \leq i \leq d$).
    \item[(c)] The remaining $(N-d)$ eigenvalues are zero.
    \item[(d)] For any $c>0$, the matrix $\mat{C} = \frac{1}{c}\mat{B}$ remains \ac{PSD}.
\end{enumerate}
\end{theorem}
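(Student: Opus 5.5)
The plan is to work from the singular value decomposition of $\mat{A}$. The statement calls $\sigma_1,\dots,\sigma_d$ the ``eigenvalues of $\mat{A}$''; since $\mat{A}$ is rectangular, I read them as its singular values, as part (b) indicates. Write the full SVD as $\mat{A} = \mat{U}\mat{\Sigma}\mat{V}\T$, where $\mat{U}\in\R^{N\times N}$ and $\mat{V}\in\R^{d\times d}$ are orthogonal. The matrix $\mat{\Sigma}\in\R^{N\times d}$ carries $\sigma_1\geq\dots\geq\sigma_d\geq 0$ on its leading diagonal and zeros elsewhere. Substituting gives
\begin{align*}
\mat{B} = \mat{A}\mat{A}\T = \mat{U}\mat{\Sigma}\mat{V}\T\mat{V}\mat{\Sigma}\T\mat{U}\T = \mat{U}(\mat{\Sigma}\mat{\Sigma}\T)\mat{U}\T .
\end{align*}
Here $\mat{\Sigma}\mat{\Sigma}\T = \mathrm{diag}(\sigma_1^2,\dots,\sigma_d^2,0,\dots,0)\in\R^{N\times N}$. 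This is an orthogonal diagonalisation of $\mat{B}$, so its eigenvalues are exactly the diagonal entries and its eigenvectors are the columns of $\mat{U}$. These columns are the left singular vectors of $\mat{A}$, which is the claim referenced in \autoref{sec:density_matrix}.

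Parts (b) and (c) then follow by reading off the diagonal. After ordering the eigenvalues decreasingly, $\lambda_i=\sigma_i^2$ for $i\leq d$, and the remaining $N-d$ entries are zero; the hypothesis $N\geq d$ ensures this count is well defined. Part (a) follows as well, since each $\sigma_i^2$ is real and non-negative. For part (a) I would also give a direct argument that does not depend on the SVD. Because $\mat{B}\T=\mat{B}$ is real symmetric, the Spectral Theorem makes its eigenvalues real. For any eigenpair $(\lambda,\vec{x})$ with $\norm{\vec{x}}=1$,
\[
\lambda=\vec{x}\T\mat{B}\vec{x}=\norm{\mat{A}\T\vec{x}}^2\geq 0 .
\]

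For part (d), I would argue directly from the definition of PSD. The matrix $\mat{C}=\frac1c\mat{B}$ is symmetric. For every $\vec{x}\in\R^N$ we have $\vec{x}\T\mat{C}\vec{x}=\frac1c\norm{\mat{A}\T\vec{x}}^2\geq0$, because $c>0$. Equivalently, the eigenvalues of $\mat{C}$ are $\lambda_i/c\geq 0$, with the same eigenvectors.

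No step is genuinely hard. The main point of care is bookkeeping: the statement's phrase ``eigenvalues of $\mat{A}$'' has to be read as singular values, and the full SVD has to be used (with $\mat{U}$ square) so that the $N-d$ zero eigenvalues in (c) appear explicitly rather than being lost in a thin SVD.
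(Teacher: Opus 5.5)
Your proposal is correct and follows essentially the same route as the paper. For (a) it uses symmetry plus $\lambda\norm{\vec{v}}^2=\norm{\mat{A}\T\vec{v}}^2$; for (b) and (c) it reads the eigenvalues off the diagonal of the full SVD $\mat{B}=\mat{U}(\mat{\Sigma}\mat{\Sigma}\T)\mat{U}\T$ with square $\mat{U}$; and for (d) it scales the quadratic form by $1/c>0$. The paper's proof also treats the $\sigma_i$ as singular values, as you do, so your reading of ``eigenvalues of $\mat{A}$'' matches it.
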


\begin{proof}
\textbf{Part (a): Real and Non-negative}

First, we establish that $\mat{B}$ is symmetric. Using the property $(\mat{X}\mat{Y})\T = \mat{Y}\T\mat{X}\T$:
\[
\mat{B}\T = (\mat{A}\mat{A}\T)\T = (\mat{A}\T)\T \mat{A}\T = \mat{A}\mat{A}\T = \mat{B}.
\]
Since $\mat{B}$ is real and symmetric, by the Spectral Theorem, all its eigenvalues are real.

Next, we show that $\mat{B}$ is \ac{PSD}. Let $\lambda$ be an eigenvalue of $\mat{B}$ and $\vec{v} \in \R^N$ be the corresponding non-zero eigenvector.
\begin{align*}
    \mat{B}\vec{v} &= \lambda \vec{v} \\
    \vec{v}\T \mat{B} \vec{v} &= \vec{v}\T (\lambda \vec{v}) \\
    \vec{v}\T \mat{A} \mat{A}\T \vec{v} &= \lambda (\vec{v}\T \vec{v}) \\
    (\mat{A}\T\vec{v})\T (\mat{A}\T\vec{v}) &= \lambda \norm{\vec{v}}^2 \\
    \norm{\mat{A}\T\vec{v}}^2 &= \lambda \norm{\vec{v}}^2.
\end{align*}
Since the squared Euclidean norm is non-negative, $\norm{\mat{A}\T\vec{v}}^2 \geq 0$ and $\norm{\vec{v}}^2 > 0$, it follows that:
\[
\lambda = \frac{\norm{\mat{A}\T\vec{v}}^2}{\norm{\vec{v}}^2} \geq 0.
\]

\vspace{1em}
\noindent \textbf{Part (b) and (c): Magnitude of Eigenvalues}

We utilize the Singular Value Decomposition (SVD) of $\mat{A}$. Let $\mat{A} = \mat{U} \mat{\Sigma} \mat{V}\T$, where:
\begin{itemize}
    \item $\mat{U} \in \R^{N \times N}$ is an orthogonal matrix.
    \item $\mat{V} \in \R^{d \times d}$ is an orthogonal matrix.
    \item $\mat{\Sigma} \in \R^{N \times d}$ is a rectangular diagonal matrix containing singular values $\sigma_i$.
\end{itemize}

Substituting the SVD into the expression for $\mat{B}$:
\begin{align*}
    \mat{B} &= (\mat{U} \mat{\Sigma} \mat{V}\T) (\mat{U} \mat{\Sigma} \mat{V}\T)\T \\
    &= \mat{U} \mat{\Sigma} \mat{V}\T \mat{V} \mat{\Sigma}\T \mat{U}\T.
\end{align*}
Since $\mat{V}$ is orthogonal, $\mat{V}\T\mat{V} = \mat{I}_d$. Thus:
\[
    \mat{B} = \mat{U} (\mat{\Sigma} \mat{\Sigma}\T) \mat{U}\T.
\]
This equation represents the eigendecomposition of $\mat{B}$, where the columns of $\mat{U}$ are the eigenvectors and the diagonal entries of the matrix $\mat{\Lambda} = \mat{\Sigma} \mat{\Sigma}\T$ are the eigenvalues.

We calculate $\mat{\Sigma} \mat{\Sigma}\T$ (an $N \times N$ matrix):
\begin{align}
    \mat{\Sigma} \mat{\Sigma}\T = \diag(\sigma_1, \dots, \sigma_d, 0, \dots, 0) \times \nonumber \\ 
    \diag(\sigma_1, \dots, \sigma_d, 0, \dots, 0)\T \nonumber
\end{align}
The resulting diagonal entries are:
\[
(\mat{\Sigma} \mat{\Sigma}\T)_{ii} = 
\begin{cases} 
\sigma_i^2 & \text{for } 1 \leq i \leq d \\
0 & \text{for } d < i \leq N 
\end{cases}
\]
Thus, the first $d$ eigenvalues are $\sigma_i^2$, and the remaining $N-d$ eigenvalues are $0$.

\textbf{Part (d): Positive Semi-Definiteness Preservation}

As shown in (a), $\mat{B}$ is \ac{PSD}.
By definition of \ac{PSD} matrices, we have for any vector $x \in \mathbb{R}^Ns$ :
\begin{align}
    x\T \mat{B} x \geq 0 \nonumber
\end{align}
Given $\mat{B}$ is symmetric, we first show the symmetry of $\mat{C}$
\begin{align}
    \mat{C}\T = \left(\frac{1}{c} \mat{B}\right)\T=\frac{1}{c} \mat{B}\T=\frac{1}{c} \mat{B}=\mat{C} \nonumber
\end{align}
Then, we show that for any vector $x$:
\begin{align}
    x\T \mat{C} x=x\T\left(\frac{1}{c} \mat{B}\right) x=\frac{1}{c}\left(x\T \mat{B} x\right)\geq 0 \nonumber
\end{align}
because $x\T \mat{B} x \geq 0$ and $\frac{1}{c}>0$.
Therefore, $\mat{C}$ is \ac{PSD}.

\end{proof}

\section{Proof for Eigen Decomposition of QRE}
\label{sec:QRE-eigen}

Here we prove the derivation from \eqref{eq:QRE} to \eqref{eq:QRE_eigen}.

\begin{proof}
\textbf{Derivation of Term 1:} $\Tr(\mat{\rho} \ln \mat{\rho})$

First, we compute the matrix logarithm of $\mat{\rho}$. For a diagonalized matrix, we apply the logarithm to the eigenvalues:
\begin{align}
    \ln \mat{\rho} = \sum_k \ln (\lambda_k) \vec{v}_k \vec{v}_k\T
\end{align}
Then, multiply $\mat{\rho}$ by $\ln \mat{\rho}$:
\begin{align}
    \mat{\rho} (\ln \mat{\rho}) = \left(\sum_i \lambda_i \vec{v}_i \vec{v}_i\T\right) \left(\sum_k \ln (\lambda_k) \vec{v}_k \vec{v}_k\T\right)
\end{align}

We expand the product. Since scalars commute, we can rearrange the terms. 
\begin{align}
    = \sum_i \sum_k \lambda_i \ln (\lambda_k) \vec{v}_i (\vec{v}_i\T \vec{v}_k) \vec{v}_k\T
\end{align}

Using the orthonormality, $\vec{v}_i\T \vec{v}_k = \delta_{ik}$, the terms are zero unless $i=k$:
\begin{align}
    = \sum_i \lambda_i \ln (\lambda_i) \vec{v}_i \vec{v}_i\T
\end{align}

Then, we take the matrix Trace on both sides. 
Note that the Trace of a matrix is linear, and we can take the summation out from the trace:
\begin{align}
    \Tr(\mat{\rho} \ln \mat{\rho}) = \sum_i \lambda_i \ln (\lambda_i) \Tr(\vec{v}_i \vec{v}_i\T)
\end{align}

Using the cyclic property of the Trace (i.e. $\Tr(\mat{A}\mat{B}) = \Tr(\mat{B}\mat{A})$), we have
\begin{align}
    \Tr(\vec{v}_i \vec{v}_i\T) &= \Tr(\vec{v}_i\T \vec{v}_i) \\ \nonumber
    &= \vec{v}_i\T \vec{v}_i = \norm{\vec{v}_i}^2 = 1
\end{align}
Thus,
\begin{align}
    \Tr(\mat{\rho} \ln \mat{\rho}) = \sum_i \lambda_i \ln \lambda_i
\end{align}

\vspace{1em}
\textbf{Derivation of Term 2:} $\Tr(\mat{\rho} \ln \mat{\sigma})$

First, write out $\ln \mat{\sigma}$:
\begin{align}
    \ln \mat{\sigma} = \sum_j \ln (\mu_j) \vec{u}_j \vec{u}_j\T
\end{align}

Substitute the expansions of $\mat{\rho}$ and $\ln \mat{\sigma}$ into the Trace:
\par\nobreak
{\small
\vspace{-2mm}
\begin{align}
    \Tr(\mat{\rho} \ln \mat{\sigma}) = \Tr\left( \left[\sum_i \lambda_i \vec{v}_i \vec{v}_i\T\right] \left[\sum_j \ln (\mu_j) \vec{u}_j \vec{u}_j\T\right] \right)
\end{align}
}

Pull the summations and scalars ($\lambda_i$ and $\ln \mu_j$) outside the Trace:
\begin{align}
    = \sum_i \sum_j \lambda_i \ln (\mu_j) \cdot \Tr(\vec{v}_i \vec{v}_i\T \vec{u}_j \vec{u}_j\T)
\end{align}

Focus on the term inside the Trace: $\vec{v}_i \vec{v}_i\T \vec{u}_j \vec{u}_j\T$. Matrix multiplication is associative. We can group the middle terms $(\vec{v}_i\T \vec{u}_j)$. Note that $\vec{v}_i\T \vec{u}_j$ is the dot product of two vectors, resulting in a scalar value.
\begin{align}
    \vec{v}_i (\vec{v}_i\T \vec{u}_j) \vec{u}_j\T = (\vec{v}_i\T \vec{u}_j) \cdot (\vec{v}_i \vec{u}_j\T)
\end{align}

Substitute this back into the Trace:
\begin{align}
    \Tr\left( (\vec{v}_i\T \vec{u}_j) \cdot \vec{v}_i \vec{u}_j\T \right) = (\vec{v}_i\T \vec{u}_j) \cdot \Tr(\vec{v}_i \vec{u}_j\T)
\end{align}

By applying the cyclic property of the Trace again to $\Tr(\vec{v}_i \vec{u}_j\T)$:
\begin{align}
    \Tr(\vec{v}_i \vec{u}_j\T) = \Tr(\vec{u}_j\T \vec{v}_i)
\end{align}

Note that for real vectors, the dot product is symmetric, so $\vec{u}_j\T \vec{v}_i = \vec{v}_i\T \vec{u}_j$.
\begin{align}
    = \vec{v}_i\T \vec{u}_j
\end{align}

Combine the parts:
\begin{align}
    \text{Trace Term} = (\vec{v}_i\T \vec{u}_j) \cdot (\vec{v}_i\T \vec{u}_j) = (\vec{v}_i\T \vec{u}_j)^2
\end{align}

Therefore, Term 2 becomes:
\begin{align}
    \Tr(\mat{\rho} \ln \mat{\sigma}) = \sum_i \lambda_i \sum_j (\vec{v}_i\T \vec{u}_j)^2 \ln \mu_j
\end{align}

Subtract Term 2 from Term 1:
\par\nobreak
{\small
\vspace{-2mm}
\begin{align}
    S(\mat{\rho} \Vert \mat{\sigma}) = \left(\sum_i \lambda_i \ln \lambda_i\right) - \left(\sum_i \lambda_i \sum_j (\vec{v}_i\T \vec{u}_j)^2 \ln \mu_j\right)
\end{align}
}

Rearranging the parenthesis to match the target format:
\par\nobreak
{\small
\vspace{-2mm}
\begin{align}
    S(\mat{\rho} \Vert \mat{\sigma}) = \sum_i \lambda_i (\ln \lambda_i) - \sum_i \lambda_i \left(\sum_j (\vec{v}_i\T \vec{u}_j)^2 \ln \mu_j\right)
\end{align}
}
\end{proof}

\section{Proof for QRE Estimation with Residual Mass}
\label{sec:QRE-mass}

\begin{utheorem}
Let $\mat{\rho}$ be the density matrix of a base encoder with non-zero orthonormal eigenpairs $\{(\lambda_i, \vec{v}_i)\}_{i=1}^{K_{\rho}}$. 
Let $\mat{\sigma}$ be the density matrix of a target encoder with non-zero orthonormal eigenpairs $\{(\mu_j, \vec{u}_j)\}_{j=1}^{K_{\sigma}}$. 
By perturbating $\mat{\sigma}$ with a small noise parameter $\epsilon > 0$ on its null space, the \ac{QRE} can be approximated as follows:
\begin{align}
    S(\mat{\rho} \Vert \mat{\sigma}_\epsilon) = \sum_{i=1}^{K_{\rho}} \lambda_i (\ln \lambda_i) - \sum_{i=1}^{K_{\rho}} \lambda_i (\mathcal{C}_i  + r_i \ln \epsilon)
\end{align}.
where $c_i = \sum_{j=1}^{K_{\sigma}} (\vec{v}_i\T \vec{u}_j)^2$ is the captured mass, $r_i = 1 - c_i$ is the residual mass, and $\mathcal{C}_i = \sum_{j=1}^{K_{\sigma}} (\vec{v}_i\T \vec{u}_j)^2 \ln \mu_j$ is the cross-entropy contribution from the aligned subspace.
\end{utheorem}

\begin{proof}

We construct a regularised target matrix $\mat{\sigma}_\epsilon$ by adding a small perturbation $\epsilon$ to the zero eigenvalues corresponding to the null space of $\mat{\sigma}$. 
Let the subspace spanned by the target encoder be $\mathcal{U}=\operatorname{span}\{\vec{u}_1, \ldots, \vec{u}_{K_{\sigma}}\}$. 
Let $\{\vec{u}_{{K_{\sigma}}+1}, \ldots, \vec{u}_N\}$ be an orthonormal basis for the orthogonal complement $\mathcal{U}^{\perp}$ (the null space). 
We define $\mat{\sigma}_\epsilon$ as:
\begin{align}
    \label{eq:pad_sigma}
     \mat{\sigma}_\epsilon = \mat{\sigma} + \epsilon \sum_{k={K_{\sigma}}+1}^N \vec{u}_k \vec{u}_k\T
\end{align}

Without loss of generality, assume $K_{\sigma} \leq K_{\rho}$.\footnote{With unit base encoder used as $\mat{\rho}$, we always have $K_{\sigma} \leq K_{\rho}$.}
The spectral decomposition of $\mat{\sigma}_\epsilon$ is therefore the sum of the original active components and the noise components:
\begin{align}
     \mat{\sigma}_\epsilon = \sum_{j=1}^{K_{\sigma}} \mu_j \vec{u}_j \vec{u}_j\T + \sum_{k={K_{\sigma}}+1}^{K_{\rho}} \epsilon \vec{u}_k \vec{u}_k\T
\end{align}

Note that the set of all eigenvectors $\{\vec{u}_j\}_{j=1}^{K_{\sigma}} \cup \{\vec{u}_k\}_{k={K_{\sigma}}+1}^{K_{\rho}}$ now forms a complete orthonormal basis for the effective subspace $\R^{K_{\rho}}$.

Recalling \eqref{eq:QRE}, the definition of \ac{QRE} is:
\begin{align}
     S(\mat{\rho} \Vert \mat{\sigma}_\epsilon) = \Tr(\mat{\rho} \ln \mat{\rho}) - \Tr (\mat{\rho} \ln \mat{\sigma}_\epsilon)
\end{align}

The first term is simply the negative Von Neumann entropy of $\mat{\rho}$:
\begin{align}
     \Tr(\mat{\rho} \ln \mat{\rho}) = \sum_i \lambda_i \ln \lambda_i
\end{align}

We focus on the second term (Cross Entropy), expanded using the eigenvectors of $\mat{\rho}$ (denoted $\{\vec{v}_i\}$):
\begin{align}
     \Tr(\mat{\rho} \ln \mat{\sigma}_\epsilon) = \sum_i \lambda_i \vec{v}_i\T (\ln \mat{\sigma}_\epsilon) \vec{v}_i
\end{align}

As we already clarified in \autoref{sec:density_matrix}, $\ln$ is the matrix logarithm, not the element-wise logarithm.
For a symmetric \ac{PSD} matrix $\mat{\rho}$ with its spectral decomposition $\mat{\rho} = \sum \lambda_i \vec{v}_i \vec{v}_i\T$, the matrix logarithm is defined by the logarithm of its eigenvalues: $\ln \mat{\rho} = \sum (\ln \lambda_i) \vec{v}_i \vec{v}_i\T$.

Using the spectral decomposition of $\mat{\sigma}_\epsilon$, the matrix logarithm $\ln \mat{\sigma}_\epsilon$ is:
\begin{align}
     \ln \mat{\sigma}_\epsilon = \sum_{j=1}^{K_{\sigma}} (\ln \mu_j) \vec{u}_j \vec{u}_j\T + \sum_{k={K_{\sigma}}+1}^{K_{\rho}} (\ln \epsilon) \vec{u}_k \vec{u}_k\T
\end{align}

Substituting this into the quadratic form $\vec{v}_i\T (\dots) \vec{v}_i$:
\begin{align}
\vec{v}_i\T (\ln \mat{\sigma}_\epsilon) \vec{v}_i &= \underbrace{\sum_{j=1}^{K_{\sigma}} (\ln \mu_j) \vec{v}_i\T (\vec{u}_j \vec{u}_j\T) \vec{v}_i}_{\text{Aligned Term}} \nonumber \\
&+ \underbrace{\sum_{k={K_{\sigma}}+1}^{K_{\rho}} (\ln \epsilon) \vec{v}_i\T (\vec{u}_k \vec{u}_k\T) \vec{v}_i}_{\text{Orthogonal Term}}
\end{align}

Given that $\{\vec{v}_i\}$ and $\{\vec{u}_j\}$ are orthonormal bases, we have the properties that $\norm{\vec{v}_i}^2 =\norm{\vec{u}_j}^2 = 1$, and they satisfy the orthogonality condition:
\[
\vec{v}_i\T \vec{v}_k = \delta_{ik} \quad \text{and} \quad \vec{u}_j\T \vec{u}_l = \delta_{jl}
\]
where $\delta_{ik}$ is the Kronecker delta, which equals 1 if $i=k$ and 0 otherwise.

Recognizing that 
\begin{align}
\vec{v}_i\T \vec{u}_j \vec{u}_j\T \vec{v}_i = (\vec{v}_i\T \vec{u}_j) (\vec{u}_j\T \vec{v}_i) = (\vec{v}_i\T \vec{u}_j)^2
\end{align}
Then,
\begin{align}
\label{eq:expansion}
\vec{v}_i\T (\ln \mat{\sigma}_\epsilon) \vec{v}_i & = \sum_{j=1}^{K_{\sigma}} (\ln \mu_j) (\vec{v}_i\T \vec{u}_j)^2 + \\ \nonumber 
& \sum_{k={K_{\sigma}}+1}^{K_{\rho}} (\ln \epsilon) (\vec{v}_i\T \vec{u}_k)^2
\end{align}

% The term $\sum_{j=1}^{K_{\sigma}} (\vec{v}_i\T \vec{u}_j)^2$ is exactly the captured mass $m_i$.

Since the basis $\{\vec{u}_k\}_{k=1}^{K_{\rho}}$ forms a complete orthonormal basis for $\R^{K_{\rho}}$,\footnote{This is the case for our proposed unit base encoder.} \textbf{Parseval's identity} asserts that the squared norm of any unit vector $\vec{v}_i$ equals the sum of its squared projection coefficients:
\begin{align}
    \label{eq:parseval}
    \norm{\vec{v}_i}^2 = \sum_{k=1}^{K_{\rho}} (\vec{v}_i\T \vec{u}_k)^2 = 1
\end{align}

We decompose this sum into components lying in the aligned subspace $\mathcal{U}$ (spanned by $\{\vec{u}_j\}_{j=1}^{K_{\sigma}}$) and the orthogonal null space $\mathcal{U}^\perp$ (spanned by $\{\vec{u}_k\}_{k=K_{\sigma}+1}^{K_{\rho}}$). 
This decomposition corresponds to the \textbf{generalized Pythagorean theorem}, which states that $\norm{\vec{v}_i}^2 = \norm{\mathcal{P}_{\mathcal{U}}(\vec{v}_i)}^2 + \norm{\mathcal{P}_{\mathcal{U}^\perp}(\vec{v}_i)}^2$, where $\mathcal{P}$ denotes the projection operator.
Substituting the projection coefficients into the theorem yields:
\begin{align}
    \label{eq:pythagoras}
    \underbrace{\sum_{j=1}^{K_{\sigma}} (\vec{v}_i\T \vec{u}_j)^2}_{\text{Squared projection on } \mathcal{U}} + \underbrace{\sum_{k={K_{\sigma}}+1}^{K_{\rho}} (\vec{v}_i\T \vec{u}_k)^2}_{\text{Squared projection on } \mathcal{U}^\perp} = 1
\end{align}

Recognizing that the first term is the captured mass $c_i$, we rearrange the equation to solve for the projection onto the null space:
\par\nobreak
{\small
\vspace{-4mm}
\begin{align}
\sum_{k={K_{\sigma}}+1}^{K_{\rho}} (\vec{v}_i\T \vec{u}_k)^2 = 1 - \sum_{j=1}^{K_{\sigma}} (\vec{v}_i\T \vec{u}_j)^2 = 1 - c_i = r_i
\end{align}
}

Substituting $m_i$ and $r_i$ back into the expansion in \eqref{eq:expansion}:
\begin{align}
\vec{v}_i\T (\ln \mat{\sigma}_\epsilon) \vec{v}_i = \left( \sum_{j=1}^{K_{\sigma}} (\vec{v}_i\T \vec{u}_j)^2 \ln \mu_j \right) + r_i \ln \epsilon
\end{align}

Combining the terms, we arrive at the computational form used in \eqref{eq:qre-mass-proof}. 
We define the cross-entropy contribution from the aligned subspace as $\mathcal{C}_i = \sum_{j=1}^{K_{\sigma}} (\vec{v}_i\T \vec{u}_j)^2 \ln \mu_j$.
\begin{align}
S(\mat{\rho} \Vert \mat{\sigma}_\epsilon) & = \sum_{i=1}^{K_{\rho}} \lambda_i (\ln \lambda_i) - \sum_{i=1}^{K_{\rho}} \lambda_i (\mathcal{C}_i + r_i \ln \epsilon)
\end{align}

\end{proof}

\section{Theoretical Analysis of the QRE-based Encoder Feature Space}
\label{sec:feat-space}

In \autoref{sec:encoder_embedding}, we created a feature vector $\vec{\phi}_m$ to represent an encoder $f_m$ using the \ac{QRE} $S(\mat{\rho}_{0}||\mat{\rho}_m)$ of $\mat{\rho}_m$ (i.e. the density matrix corresponding to $f_m$) with respect to $\mat{\rho_{0}}$ (i.e. the density matrix corresponding to the unit base encoder, $f_0$).
Given that our goal is to create a map where each target encoder is represented with respect to its relationship to the other encoders, we consider it to be insightful to mathematically analyse the encoder feature vector space further.

For this purpose, let us consider two target encoders $f_m$ and $f_m'$ with their respective density matrices $\mat{\rho}_m$ and $\mat{\rho}_m'$.
For simplicity of the disposition, we assume both  $\mat{\rho}_m$ and $\mat{\rho}_m'$ to be full rank in $\R^N$.
Therefore, we can denote the eigenpairs for $\mat{\rho}_m$ and $\mat{\rho}_m'$ respectively as
$\{(\mu_j, \vec{u}_j)\}_{j=1}^{N}$ and $\{(\mu'_k, \vec{u}'_k)\}_{k=1}^{N}$.
For rank deficient cases we can add a small random perturbation to the eigenvalues to make the density matrices full rank.

Following \eqref{eq:axis_ele}, we can write the $w$-th dimensions of $\vec{\phi}_m$ and  $\vec{\phi}_m'$ as follows.
\par\nobreak
{\small
\vspace{-3mm}
\begin{align}
\label{eq:m-w}
    \phi_{m,w} &= \lambda_w \ln \lambda_w - \lambda_w \sum_{j=1}^{N} \left( \vec{v}_w\T\vec{u}_{j}\right)^2 \ln \mu_j \\ 
\label{eq:m-prime-w}
    \phi_{m',w} &= \lambda_w \ln \lambda_w - \lambda_w \sum_{k=1}^{N} \left( \vec{v}_w\T\vec{u}'_{k}\right)^2 \ln \mu'_k
\end{align}
}%

Because the unit base encoder has all of its eigenvalues $\lambda_w = 1/N$ and eigenvectors $\vec{v}_w$ as the $w$-th unit vector, we can substitute those values in \eqref{eq:m-w} and \eqref{eq:m-prime-w} to further simplify as follows.
\par\nobreak
{\small
\vspace{-3mm}
\begin{align}
\label{eq:m-w}
    \phi_{m,w} &= -\frac{1}{N}\ln N- \frac{1}{N} \sum_{j=1}^{N} \left( \vec{v}_w\T\vec{u}_{j}\right)^2 \ln \mu_j \\ 
\label{eq:m-prime-w}
    \phi_{m',w} &=  -\frac{1}{N}\ln N - \frac{1}{N} \sum_{k=1}^{N} \left( \vec{v}_w\T\vec{u}'_{k}\right)^2 \ln \mu'_k
\end{align}
}%

In particular, we are interested in the \emph{offset}, $(\phi_{m,w} -  \phi_{m',w})$, between the $w$-th feature of $f_m$ and $f_m'$, which is given by \eqref{eq:offset}.
\par\nobreak
{\small
\vspace{-3mm}
\begin{align}
    \label{eq:offset}
      -\frac{1}{N} \sum_{j=1}^{N} \left( \vec{v}_w\T\vec{u}_{j}\right)^2 \ln \mu_j + \frac{1}{N} \sum_{k=1}^{N} \left( \vec{v}_w\T\vec{u}'_{k}\right)^2 \ln \mu'_k
\end{align}
}
We see that the self-entropy terms for the unit base encoder cancel out in this offset.

Moreover, recall that the inner-product between the $w$-th unit vector $\vec{v}_w$ and $\vec{u}_j$ is simply selecting the $w$-th dimension $u_{j,w}$ of  $\vec{u}_j$.
Therefore, cross-entropy terms can be further simplified as follows:
\par\nobreak
{\small
\vspace{-3mm}
\begin{align}
    \label{eq:cross-simple}
    -\frac{1}{N} \sum_{j}^{N}\left(\vec{v}_w\T\vec{u}_j\right)^2\ln \mu_j &=  -\frac{1}{N} \sum_{j}^{N} \left(u_{j,w}\right)^2 \ln \mu_j  \\
    &= \frac{1}{N}  \sum_{j}^{N} \left(u_{j,w}\right)^2 (-\ln \mu_j)   \\
    &= \frac{1}{N}  \sum_{j}^{N} \left(u_{j,w} \sqrt{(-\ln \mu_j)} \right)^2 \label{eq:neg} \\
    &= \frac{1}{N} \norm{\bar{\vec{u}_w}}^2 . \label{eq:bar}
\end{align}
}%
In \eqref{eq:neg} we used the fact that all eigenvalues of density matrices are in $[0,1]$, hence $(-\ln \mu_j) > 0$ (in practice $\mu_j \in (0,1)$).
% Moreover, we define $\bar{\vec{u}}$ as version of $\vec{u}$ where its $j$-th dimension is multiplied by $(-\ln \mu_j)$.
Moreover, we define $\bar{\vec{u}}_w$ as a vector in $\mathbb{R}^N$ whose $j$-th component is $ u_{j,w} \sqrt{-\ln \mu_j}$.

Plugging these results back, we can evaluate the offset as given by \eqref{eq:offset-simplified}.
\begin{align}
    \label{eq:offset-simplified}
    \phi_{m,w} -  \phi_{m',w} = \frac{1}{N} \left( \norm{\bar{\vec{u}_w}}^2 - \norm{\bar{\vec{u}'_w}}^2\right)
\end{align}

From \eqref{eq:offset-simplified}, we see that the offset is independent of the density matrix of the unit base encoder and depends only on the weighted projection of the eigenvectors of the target encoders onto the $w$-th basis vector of the unit base encoder.

\section{Synthetic Example}
\label{sec:illu}

To show the faithfulness of our method, we provide an example of synthetic embeddings.
We create synthetic embeddings with different levels of noise added to the 10,000-dimensional identity density matrix of the unit base encoder.
For the embedding matrix of the base encoder, we add noise perturbation to it as follows.

For each embedding vector $x$ in the base encoder matrix, we generate a noise vector $\vec{n}$ from a multivariate Gaussian distribution:

\begin{align}
\vec{n} \sim \mathcal{N}\left(0, \sigma^2 \mathbf{I}_d\right)
\end{align}

Recall that embedding matrices are normalised.
To ensure the noise is scaled and does not dominate the original signal, we calculate the perturbed synthetic embedding $\hat{x}$ as:

\begin{align}
\hat{\vec{x}}=\vec{x}+0.5 \cdot \frac{\vec{n}}{\norm{\vec{n}}}
\end{align}

We set two noise levels with intervals $[0,1]$ and $[3,4]$ (low noise and high noise), and uniformly at random select 10 values for $\sigma^2$ from the two intervals respectively.

For the synthetic embeddings, we compute the feature vectors using the method defined in \autoref{sec:encoder_embedding} and use t-SNE to visualise as described in \autoref{sec:t-sne}.
\autoref{fig:syn_map} shows that the synthetic representations with low noise are close to each other and are spatially located from the group with high noise.
Additionally, the low noise group has low \ac{QRE} values based on the base encoder, while the high noise group has high \ac{QRE} values.
These two results validate that our map can distinguish the similar encoders and accurately visualise them in the same space.

\section{Full Nearest Neighbours Table}
\label{sec:NNs_full_table}

We randomly select the 16 encoders from our encoder list detailed in \autoref{sec:encoder_selection}.
For each selected encoder, taken as the target encoder, we retrieve its top 10 nearest neighbours based on the lowest pairwise $\ell_1$ value relative to that encoder.
\autoref{tab:nearest_neighbors_16_10} shows the full table of the nearest neighbours.

\begin{table*}[t!]
\centering
\scriptsize
\renewcommand{\arraystretch}{0.8} 
\setlength{\tabcolsep}{2pt}
\resizebox{\textwidth}{!}{%
\begin{tabular}{l c @{\hspace{0.8cm}} l c}
\toprule
\textbf{Encoder Name} & \textbf{$\ell_1$} & \textbf{Encoder Name} & \textbf{$\ell_1$} \\
\midrule
% --- Block 1 ---
\multicolumn{2}{l}{\textbf{Target: multilingual-e5-base}} & \multicolumn{2}{l}{\textbf{Target: bge-small-en-v1.5}} \\
\quad Hiveurban/multilingual-e5-base & 0.0 & \quad optimum-intel-internal-testing/bge-small-en-v1.5 & 0.0 \\
\quad embaas/s-t-multilingual-e5-base & 0.0 & \quad michaelfeil/bge-small-en-v1.5 & 0.0 \\
\quad d0rj/e5-base-en-ru & 0.0012 & \quad JALLAJ/5epo & 0.0687 \\
\quad Renan1997/sentence-transformer & 0.0263 & \quad raul-delarosa99/bge-small-en-v1.5-RIRAG\_ObliQA & 0.0692 \\
\quad antoinelouis/french-me5-base & 0.0515 & \quad sdadas/mmlw-e5-small & 0.0710 \\
\quad clips/e5-base-trm-nl & 0.0763 & \quad austinpatrickm/finetuned\_bge\_embeddings\_v5\_small\_v1.5 & 0.0771 \\
\quad djovak/embedic-base & 0.1134 & \quad avsolatorio/GIST-small-Embedding-v0 & 0.0803 \\
\quad Lajavaness/bilingual-embedding-base & 0.1431 & \quad baconnier/Finance2\_embedding\_small\_en-V1.5 & 0.0818 \\
\quad KarBik/legal-french-matroshka & 0.1439 & \quad jebish7/MedEmbed-small-v0.1\_MNR\_1 & 0.0833 \\
\quad intfloat/e5-base-v2 & 0.1464 & \quad OrcaDB/gte-small & 0.0860 \\
\midrule
% --- Block 2 ---
\multicolumn{2}{l}{\textbf{Target: paraphrase-MiniLM-L6-v2}} & \multicolumn{2}{l}{\textbf{Target: pubmedbert-base-embeddings}} \\
\quad harsh-wk/long-seq-paraphrase-MiniLM-L6-v2 & $3.82 e^{-4}$ & \quad NeuML/pubmedbert-base-embeddings-matryoshka & 0.1398 \\
\quad s-t/paraphrase-MiniLM-L12-v2 & 0.0632 & \quad EMBO/negative\_sampling\_pmb & 0.2033 \\
\quad annakotarba/sentence-similarity & 0.0685 & \quad kamalkraj/BioSimCSE-BioLinkBERT-BASE & 0.2221 \\
\quad s-t/paraphrase-multilingual-MiniLM-L12-v2 & 0.0685 & \quad menadsa/S-PubMedBERT & 0.2610 \\
\quad DataikuNLP/paraphrase-multilingual-MiniLM-L12-v2 & 0.0685 & \quad andreinsardi/SciBERT-SolarPhysics-Search & 0.2633 \\
\quad vahoaka/sentence-transformers-model-vahoaka-v1 & 0.0732 & \quad xlreator/snomed-biobert & 0.2667 \\
\quad tnguy564/qwen-geospatial-embedder & 0.0733 & \quad AHDMK/Sentence-BioBert-snli & 0.3038 \\
\quad s-t/paraphrase-MiniLM-L3-v2 & 0.0771 & \quad amrothemich/sapbert-sentence-transformers & 0.3222 \\
\quad gmunkhtur/paraphrase-mongolian-minilm-mn\_v2 & 0.0773 & \quad bcwarner/PubMedBERT-base-uncased-sts-combined & 0.3291 \\
\quad jaimevera1107/all-MiniLM-L6-v2-similarity-es & 0.0832 & \quad UMCU/SapBERT-from-PubMedBERT-fulltext\_bf16 & 0.3302 \\
\midrule
% --- Block 3 ---
\multicolumn{2}{l}{\textbf{Target: multilingual-e5-small}} & \multicolumn{2}{l}{\textbf{Target: mxbai-embed-large-v1}} \\
\quad beademiguelperez/s-t-multilingual-e5-small & 0.0 & \quad OrcaDB/mxbai-large & 0.0 \\
\quad d0rj/e5-small-en-ru & $5.69 e^{-4}$ & \quad WhereIsAI/UAE-Large-V1 & 0.0435 \\
\quad ferrisS/german-english-multilingual-e5-small & 0.0011 & \quad WhereIsAI/UAE-Code-Large-V1 & 0.0668 \\
\quad SergeyKarpenko1/multilingual-e5-small-legal-matryoshka\_384 & 0.0058 & \quad OrcaDB/gist-large & 0.1387 \\
\quad antoinelouis/french-me5-small & 0.0242 & \quad avsolatorio/GIST-large-Embedding-v0 & 0.1387 \\
\quad clips/e5-small-trm-nl & 0.0411 & \quad llmrails/ember-v1 & 0.1443 \\
\quad dragonkue/multilingual-e5-small-ko-v2 & 0.0566 & \quad BAAI/bge-large-en-v1.5 & 0.1447 \\
\quad djovak/embedic-small & 0.0665 & \quad katanemo/bge-large-en-v1.5 & 0.1447 \\
\quad intfloat/e5-small-v2 & 0.0783 & \quad ls-da3m0ns/bge\_large\_medical & 0.1466 \\
\quad ggrn/e5-small-v2 & 0.0783 & \quad avemio/German-RAG-UAE-LARGE-V1-TRIPLES-MERGED-HESSIAN-AI & 0.1473 \\
\midrule
% --- Block 4 ---
\multicolumn{2}{l}{\textbf{Target: multilingual-MiniLM-L12-de-en-es-fr...}} & \multicolumn{2}{l}{\textbf{Target: scincl}} \\
\quad narraticlabs/MiniLM-L6-european-union & 0.0615 & \quad andreinsardi/SciBERT-SolarPhysics-Search & 0.3285 \\
\quad sergeyzh/rubert-mini-frida & 0.1149 & \quad nasa-impact/nasa-ibm-st.38m & 0.3294 \\
\quad gmunkhtur/paraphrase-mongolian-minilm-mn\_v2 & 0.1302 & \quad ibm-granite/granite-embedding-125m-english & 0.3309 \\
\quad Omartificial-Intelligence-Space/Arabic-MiniLM-L12-v2... & 0.1303 & \quad s-t/allenai-specter & 0.3360 \\
\quad jinaai/jina-embedding-t-en-v1 & 0.1343 & \quad prdev/mini-gte & 0.3365 \\
\quad vahoaka/sentence-transformers-model-vahoaka-v1 & 0.1378 & \quad Sampath1987/EnergyEmbed-v1 & 0.3370 \\
\quad annakotarba/sentence-similarity & 0.1388 & \quad jinaai/jina-embedding-b-en-v1 & 0.3411 \\
\quad s-t/paraphrase-multilingual-MiniLM-L12-v2 & 0.1388 & \quad hkunlp/instructor-base & 0.3428 \\
\quad DataikuNLP/paraphrase-multilingual-MiniLM-L12-v2 & 0.1388 & \quad MongoDB/mdbr-leaf-mt & 0.3438 \\
\quad easonanalytica/cnm-multilingual-small-v2 & 0.1402 & \quad nasa-impact/nasa-smd-ibm-st-v2 & 0.3458 \\
\midrule
% --- Block 5 ---
\multicolumn{2}{l}{\textbf{Target: GIST-small-Embedding-v0}} & \multicolumn{2}{l}{\textbf{Target: bge-large-en-v1.5}} \\
\quad OrcaDB/gte-small & 0.0494 & \quad BAAI/bge-large-en-v1.5 & 0.0 \\
\quad embaas/s-t-gte-small & 0.0494 & \quad llmrails/ember-v1 & 0.0465 \\
\quad thenlper/gte-small & 0.0494 & \quad WhereIsAI/UAE-Large-V1 & 0.1284 \\
\quad MoralHazard/NSFW-GIST-small & 0.0698 & \quad ls-da3m0ns/bge\_large\_medical & 0.1290 \\
\quad JALLAJ/5epo & 0.0705 & \quad OrcaDB/mxbai-large & 0.1447 \\
\quad austinpatrickm/finetuned\_bge\_embeddings\_v5\_small\_v1.5 & 0.0742 & \quad mixedbread-ai/mxbai-embed-large-v1 & 0.1447 \\
\quad SmartComponents/bge-micro-v2 & 0.0763 & \quad WhereIsAI/UAE-Code-Large-V1 & 0.1520 \\
\quad TaylorAI/bge-micro-v2 & 0.0763 & \quad avemio/German-RAG-UAE-LARGE-V1-TRIPLES-MERGED-HESSIAN-AI & 0.1820 \\
\quad TaylorAI/gte-tiny & 0.0767 & \quad sdadas/mmlw-retrieval-roberta-large & 0.1881 \\
\quad intfloat/e5-small & 0.0784 & \quad sdadas/mmlw-retrieval-e5-large & 0.1883 \\
\midrule
% --- Block 6 ---
\multicolumn{2}{l}{\textbf{Target: bge-base-en-v1.5}} & \multicolumn{2}{l}{\textbf{Target: Qwen3-Embedding-0.6B}} \\
\quad Shashwat13333/bge-base-en-v1.5\_v4 & 0.0 & \quad woodx/Qwen3-Embedding-0.6B-SGLang & 0.0 \\
\quad OrcaDB/bge-base & 0.0 & \quad michaelfeil/Qwen3-Embedding-0.6B-auto & 0.0 \\
\quad datasocietyco/bge-base-en-v1.5-course-recommender-v5 & 0.0138 & \quad DeepMount00/Ita-Search & 0.2210 \\
\quad axondendriteplus/Legal-Embed-bge-base-en-v1.5 & 0.0302 & \quad tomaarsen/Qwen3-Embedding-0.6B-18-layers & 0.2282 \\
\quad pavanmantha/bge-base-en-bioembed & 0.0400 & \quad Tarka-AIR/Tarka-Embedding-350M-V1 & 0.2519 \\
\quad potsu-potsu/bge-base-biomedical-matryoshka-v3 & 0.0655 & \quad billatsectorflow/stella\_en\_400M\_v5 & 0.2638 \\
\quad prashpathak/xlscout\_standigger\_2\_aug & 0.0792 & \quad nlpai-lab/KURE-v1 & 0.2664 \\
\quad iris49/3gpp-embedding-model-v0 & 0.0934 & \quad dragonkue/BGE-m3-ko & 0.2667 \\
\quad infgrad/stella-base-en-v2 & 0.1284 & \quad seroe/Qwen3-Embedding-0.6B-turkish-triplet-matryoshka-v2 & 0.2688 \\
\quad avsolatorio/GIST-Embedding-v0 & 0.1291 & \quad dragonkue/snowflake-arctic-embed-l-v2.0-ko & 0.2700 \\
\midrule
% --- Block 7 ---
\multicolumn{2}{l}{\textbf{Target: all-mpnet-base-v2}} & \multicolumn{2}{l}{\textbf{Target: gte-large-en-v1.5}} \\
\quad optimum-intel-internal-testing/all-mpnet-base-v2 & 0.0 & \quad Salesforce/SFR-Embedding-Code-400M\_R & 0.1436 \\
\quad flax-sentence-embeddings/all\_datasets\_v4\_mpnet-base & $3.22 e^{-4}$ & \quad rbhatia46/financial-rag-matryoshka & 0.1894 \\
\quad spartan8806/atles-champion-embedding & 0.0429 & \quad dpanea/skill-assignment-transformer & 0.2376 \\
\quad wwydmanski/all-mpnet-base-v2-legal-v0.1 & 0.0440 & \quad thenlper/gte-large & 0.3088 \\
\quad hojzas/setfit-proj8-all-mpnet-base-v2 & 0.0628 & \quad OrcaDB/gist-large & 0.3143 \\
\quad deepset/all-mpnet-base-v2-table & 0.0897 & \quad avsolatorio/GIST-large-Embedding-v0 & 0.3143 \\
\quad vaios-stergio/all-mpnet-base-v2-dblp-aminer-50k-triplets-64 & 0.1043 & \quad jinaai/jina-embedding-l-en-v1 & 0.3196 \\
\quad bwang0911/jev2-legal & 0.1053 & \quad flax-sentence-embeddings/all\_datasets\_v3\_roberta-large & 0.3198 \\
\quad s-t/all-mpnet-base-v1 & 0.1164 & \quad s-t/all-roberta-large-v1 & 0.3198 \\
\quad flax-sentence-embeddings/all\_datasets\_v3\_mpnet-base & 0.1165 & \quad HIT-TMG/KaLM-embedding-multilingual-mini-v1 & 0.3227 \\
\midrule
% --- Block 8 ---
\multicolumn{2}{l}{\textbf{Target: nomic-embed-text-v1.5}} & \multicolumn{2}{l}{\textbf{Target: GradientBoosting\_model\_5\_samples...}} \\
\quad asmud/nomic-embed-indonesian & 0.0048 & \quad sergeyzh/rubert-mini-frida & 0.6257 \\
\quad corto-ai/nomic-embed-text-v1 & 0.0770 & \quad jinaai/jina-embedding-t-en-v1 & 0.6258 \\
\quad nomic-ai/nomic-embed-text-v1 & 0.0770 & \quad cnmoro/custom-model2vec-tokenlearn-small & 0.6270 \\
\quad nomic-ai/nomic-embed-text-v1-ablated & 0.0987 & \quad cointegrated/rubert-tiny2 & 0.6271 \\
\quad CatSchroedinger/nomic-v1.5-financial-matryoshka & 0.0990 & \quad sergeyzh/rubert-mini-sts & 0.6281 \\
\quad nomic-ai/nomic-embed-text-v1-unsupervised & 0.1350 & \quad narraticlabs/MiniLM-L6-european-union & 0.6305 \\
\quad simonosgoode/nomic\_embed\_fine\_tune\_law\_1.5 & 0.1400 & \quad minishlab/potion-multilingual-128M & 0.6311 \\
\quad ValentinaKim/bge-base-automobile-matryoshka & 0.1617 & \quad newmindai/TurkEmbed4STS-Static & 0.6313 \\
\quad intfloat/e5-base & 0.1662 & \quad h4g3n/multilingual-MiniLM-L12-de-en-es-fr-it-nl-pl-pt & 0.6313 \\
\quad KarBik/legal-french-matroshka & 0.1737 & \quad Samizie/avia-MiniLM-L12-v2 & 0.6316 \\

\bottomrule
\end{tabular}
}
\caption{Nearest neighbours for 16 encoders (Top-10 each) sorted in ascending order of pairwise $\ell_1$ values.
s-t denotes \textit{sentence-transformers}. 0.0 indicates value is effectively zero for 12 decimals.}
\label{tab:nearest_neighbors_16_10}
\end{table*}

\section{Heirrarchical Clustering of Nearest Neighbours}
\label{sec:dendrogram_NN}

We draw a dendrogram for 10 randomly-selected encoders from the 30 encoders for computing nearest neighbours with the top 5 nearest neighbours for each to visualise their connections, shown in \autoref{fig:dendrogram}.
It is clear that all the neighbourhoods based on different core sentence encoders shown in the same colour are separable with neighbours close to each other.
This validates that our map captures both the local and glocal relationships between encoders.

\begin{figure*}[t!]
    \centering
    \includegraphics[width=1\linewidth]{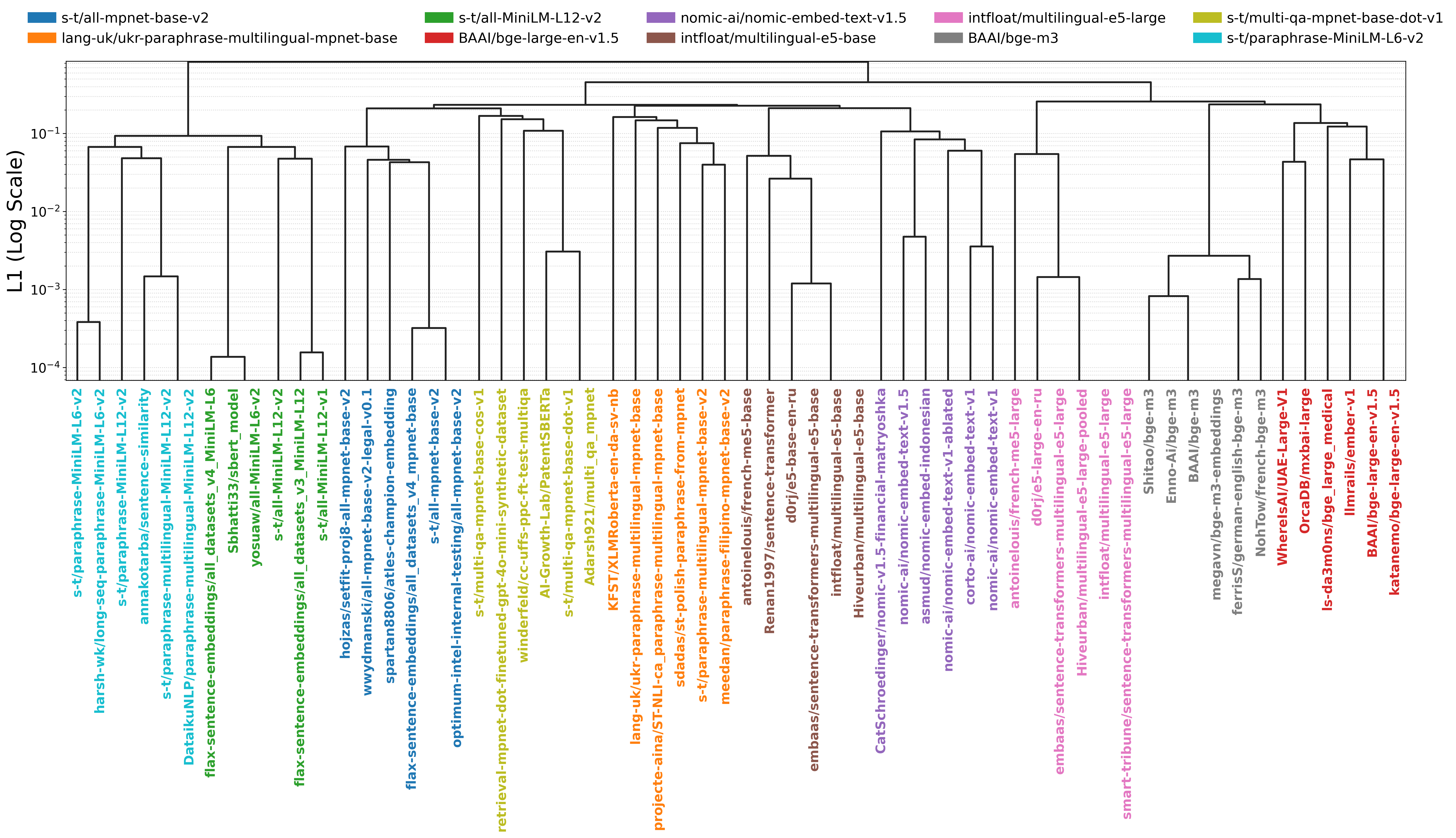}
    \caption{Heirrarchical Clustering for 10 randomly selected encoders with top 5 nearest neighbours.}
    \label{fig:dendrogram}
\end{figure*}

\section{More Maps by Training Datasets and Dimensionalities}
\label{sec:more_map_class}

\autoref{fig:more_map} shows two maps based on training datasets and the dimensionality of the embeddings.

For the map coloured by the training datasets~\autoref{fig:map_dataset}, the encoders trained with multiple datasets tend to be located across the map.
The encoders trained with single datasets exist more in the left areas of the map

For the map coloured by the output dimensionality of the embeddings corresponding to the encoder~\autoref{fig:map_dim}, we see a pattern that encoders with lower dimensionality (e.g. 384) are more located in the upper-middle area of the map, with similar size groups of 128 and 512 close together.
Encoders with higher dimensionality of 1024 have clear groups on the right and bottom side in the map.
Dimensinality of 768 is the most common dimensionality for sentence encoders, where encoders with dimensionality of 768 are spread out in the map.

\begin{figure}[t]
    \centering
    
    % --- Subfigure 1: Datasets ---
    \begin{subfigure}[b]{\linewidth}
        \centering
        \includegraphics[width=\linewidth]{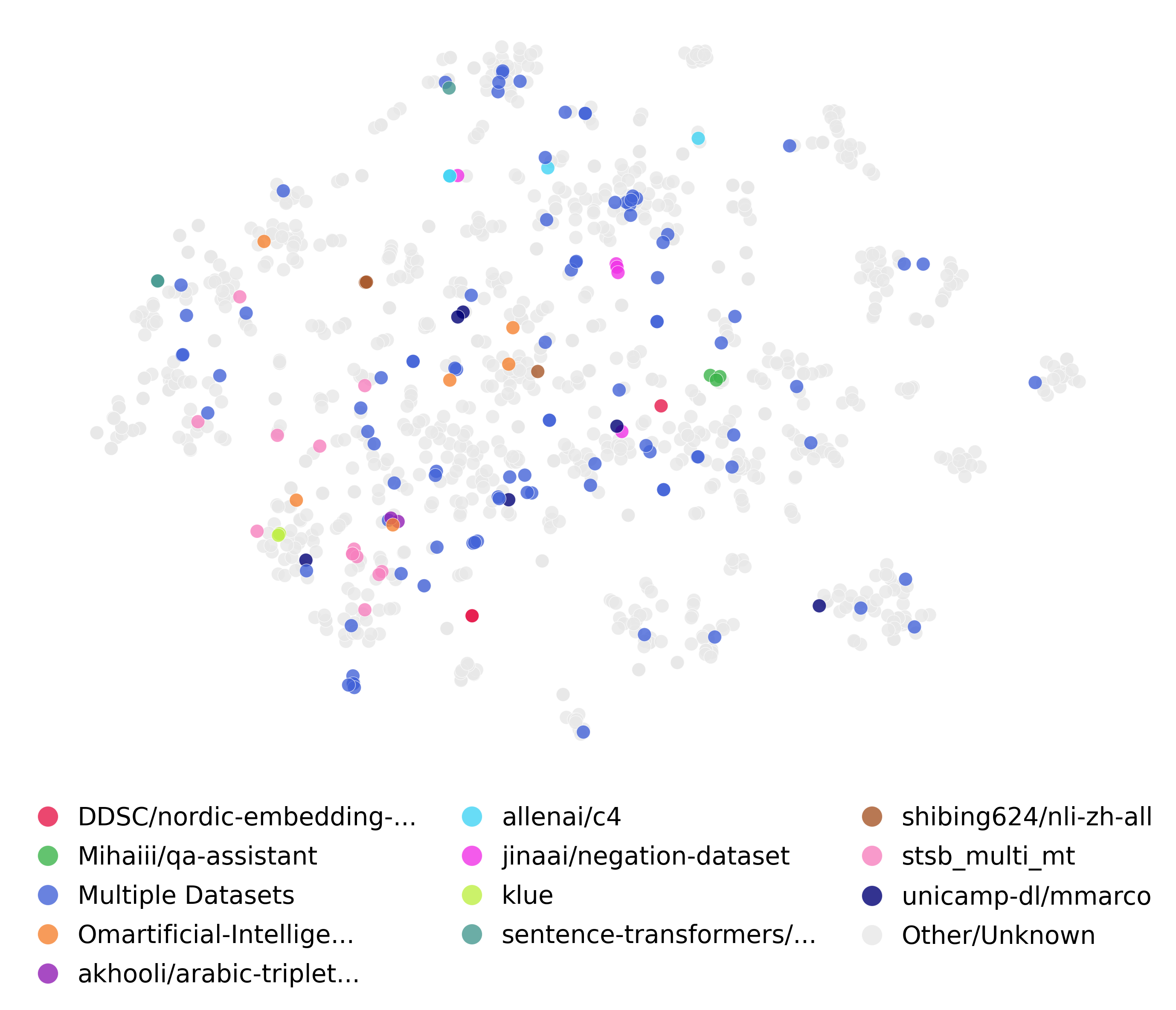}
        \caption{Training Datasets}
        \label{fig:map_dataset}
    \end{subfigure}
    
    \vspace{1em} 
    
    % --- Subfigure 2: Dimensionality ---
    \begin{subfigure}[b]{\linewidth}
        \centering
        \includegraphics[width=\linewidth]{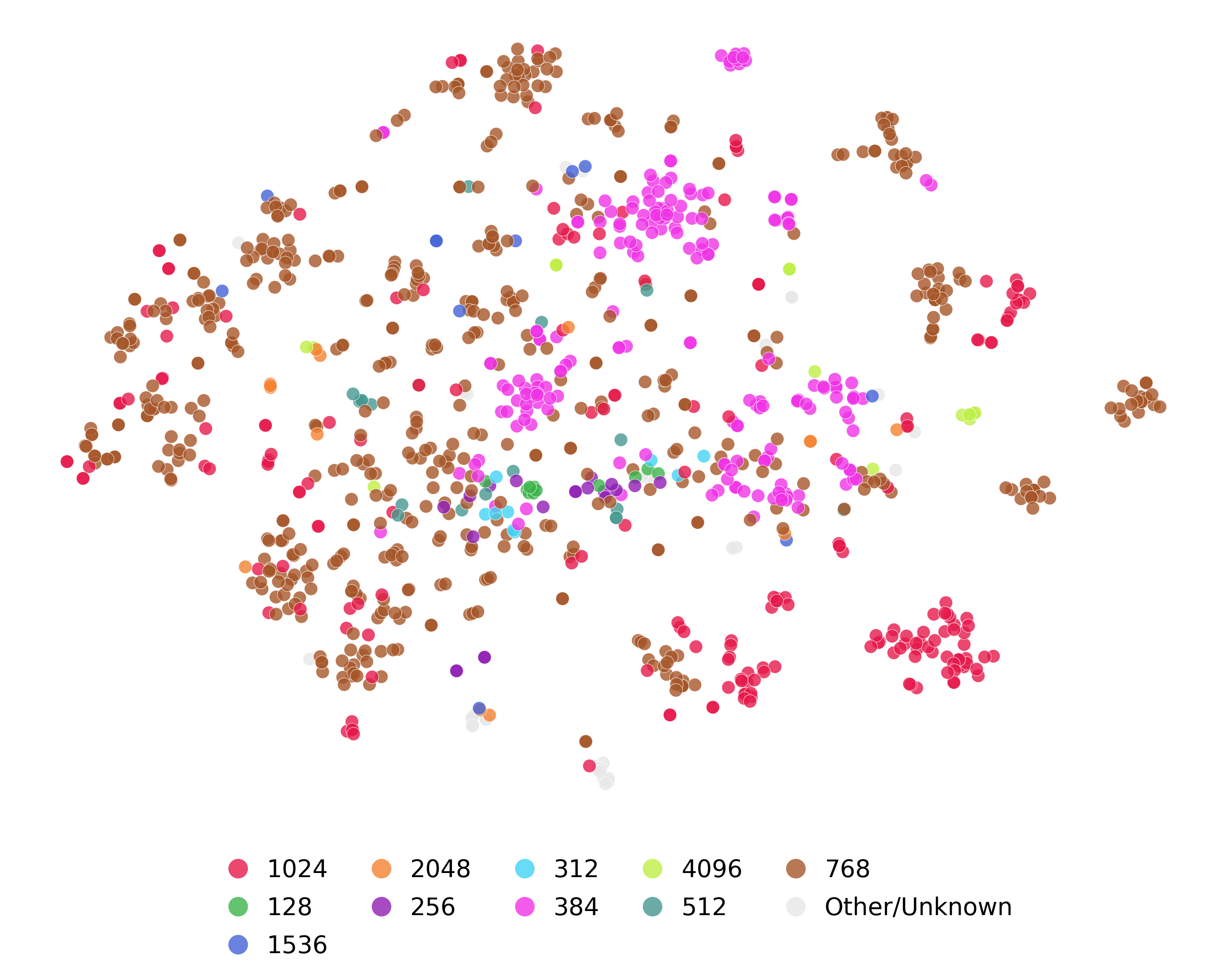}
        \caption{Dimensionality}
        \label{fig:map_dim}
    \end{subfigure}
    
    \caption{Maps based on training datasets and dimensionality of encoders.}
    \label{fig:more_map}
\end{figure}

\section{Zoomed in Heirrarchical Clustering for Top 100 Most-downloaded Encoders}
\label{sec:zoomin_den}

\autoref{fig:dendrogram_top_100_zoom} shows the zoomed-in dendrogram for hierarchical clustering of the top 100 most-downloaded encoders.

\begin{figure*}[t!]
    \centering
    \includegraphics[width=1\linewidth]{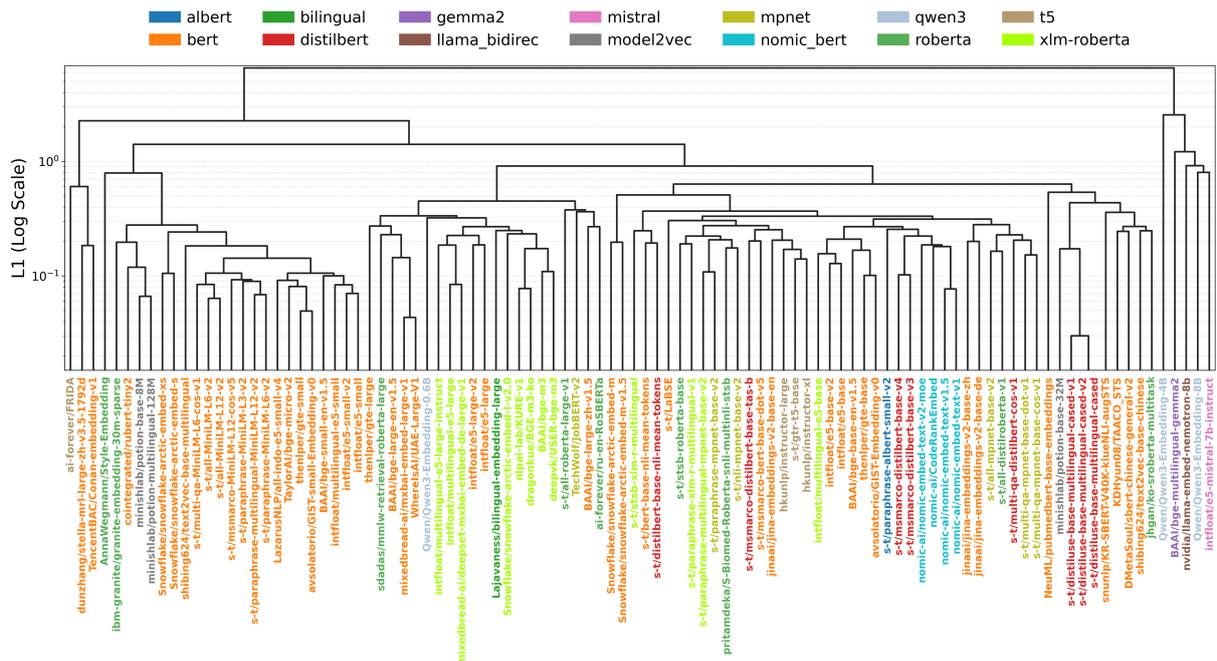}
    \caption{Zoomed-in hierarchical clustering of the top 100 most-downloaded encoders, coloured by model type. $\ell_1$ values are reported in log scale for better visualisation.}
    \label{fig:dendrogram_top_100_zoom}
\end{figure*}

\section{Full Encoder List for the Map}
\label{sec:full_model_list}

\autoref{tab:mteb_full_model_list_metadata} provides the list of 1101 encoders with encoder type, parameter size and dimensionality, which are three attributes used in this paper.

\section{Evaluating the Sentence Set Selection}
\label{sec:select_10000}

To validate selecting 10,000 sentences for creating high-quality maps, we further create feature matrices and maps of encoders based on 1000, 2500 and 5000 sentences as the sentence set $\cS$ in~\autoref{sec:method}.

As shown in \autoref{fig:5k_maps}, maps created with 5000 sentences has comparable and clear patterns in the maps as those created with 10,000 sentences in the paper.
For example, for the map coloured by encoder type, which is the most complicated map, has clear samll groups such as model2vec and gemma3\_text same as the map created with 10,000 sentences.
This validates that our selected sentence set has high quality for generating sentence embeddings and thus the maps of encoders.

Additionally, we test the effectiveness of the sentence set sizes of 1000, 2500, 5000 and 10000 on the average Spearman Correlation between the true and predicted performance on 68 MTEB tasks, as shown in~\autoref{fig:Spearman_MTEB_size}.
The average Spearman correlation saturates for a sentence set of 2500 sentences, indicating that 2500 sentences is sufficient to obtain peak average correlation on MTEB and our selection of 10,000 sentences can accurately capture the embedding spaces of all target encoders.

\begin{figure}[t!]
    \centering
    \includegraphics[width=1\linewidth]{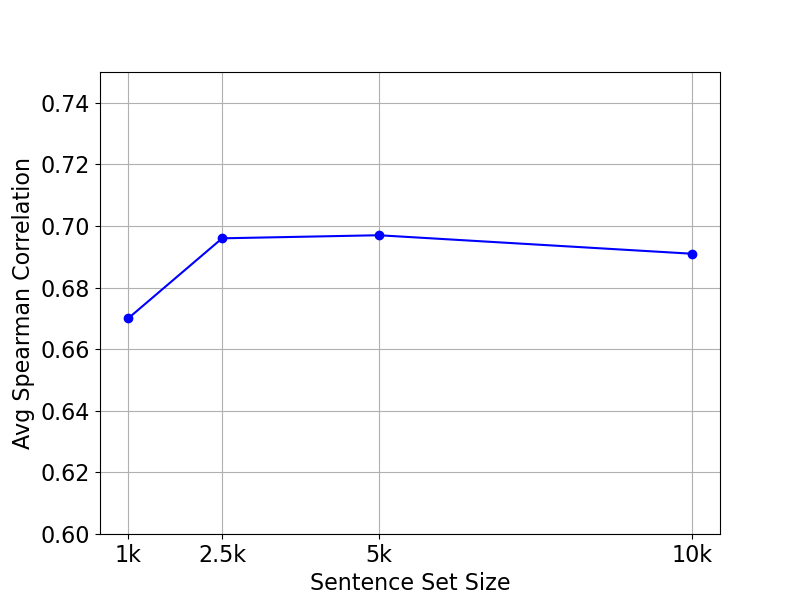}
    \caption{Average Spearman Correlation between the true and predicted performance on 68 MTEB tasks, based on sentence set sizes.}
    \label{fig:Spearman_MTEB_size}
\end{figure}

\begin{figure*}[t]
    \centering
    % Row 1
    \begin{subfigure}[b]{0.48\textwidth}
        \centering
        \includegraphics[width=\textwidth]{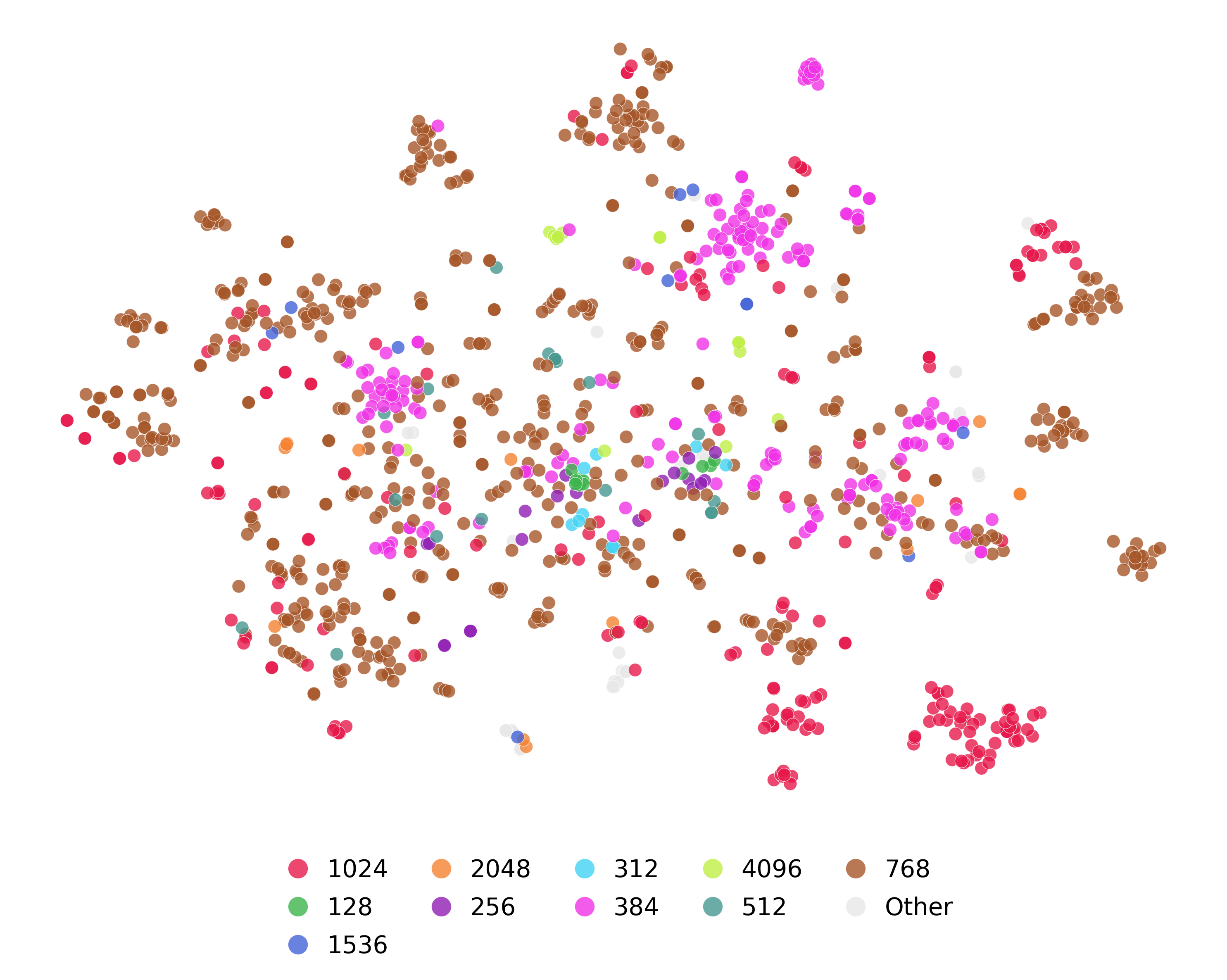}
        \caption{Dimensionality}
    \end{subfigure}
    \hfill
    \begin{subfigure}[b]{0.48\textwidth}
        \centering
        \includegraphics[width=\textwidth]{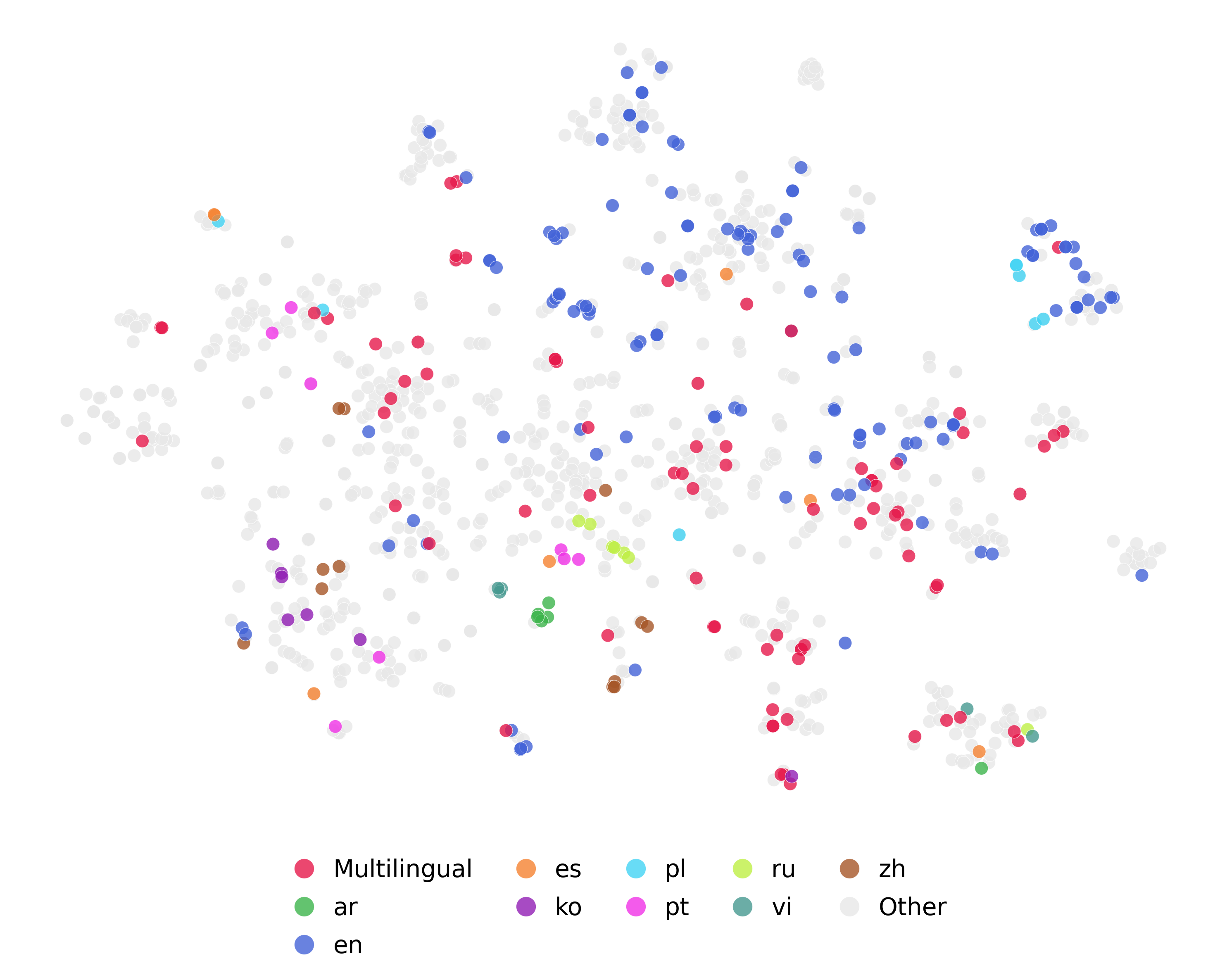}
        \caption{Pre-trained Language}
    \end{subfigure}

    \vspace{1em} % Vertical spacing between rows

    % Row 2
    \begin{subfigure}[b]{0.48\textwidth}
        \centering
        \includegraphics[width=\textwidth]{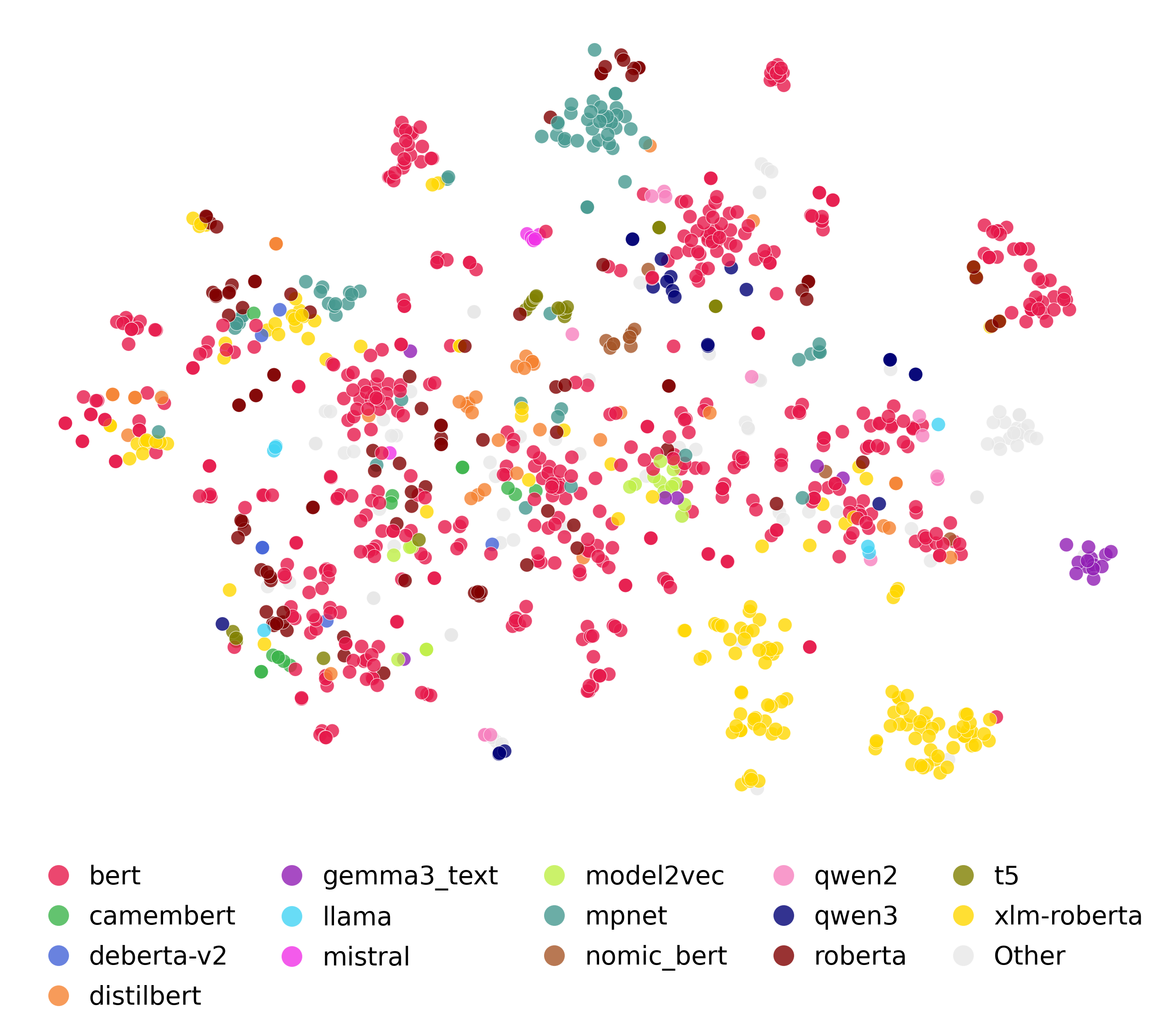}
        \caption{Encoder Type}
    \end{subfigure}
    \hfill
    \begin{subfigure}[b]{0.48\textwidth}
        \centering
        \includegraphics[width=\textwidth]{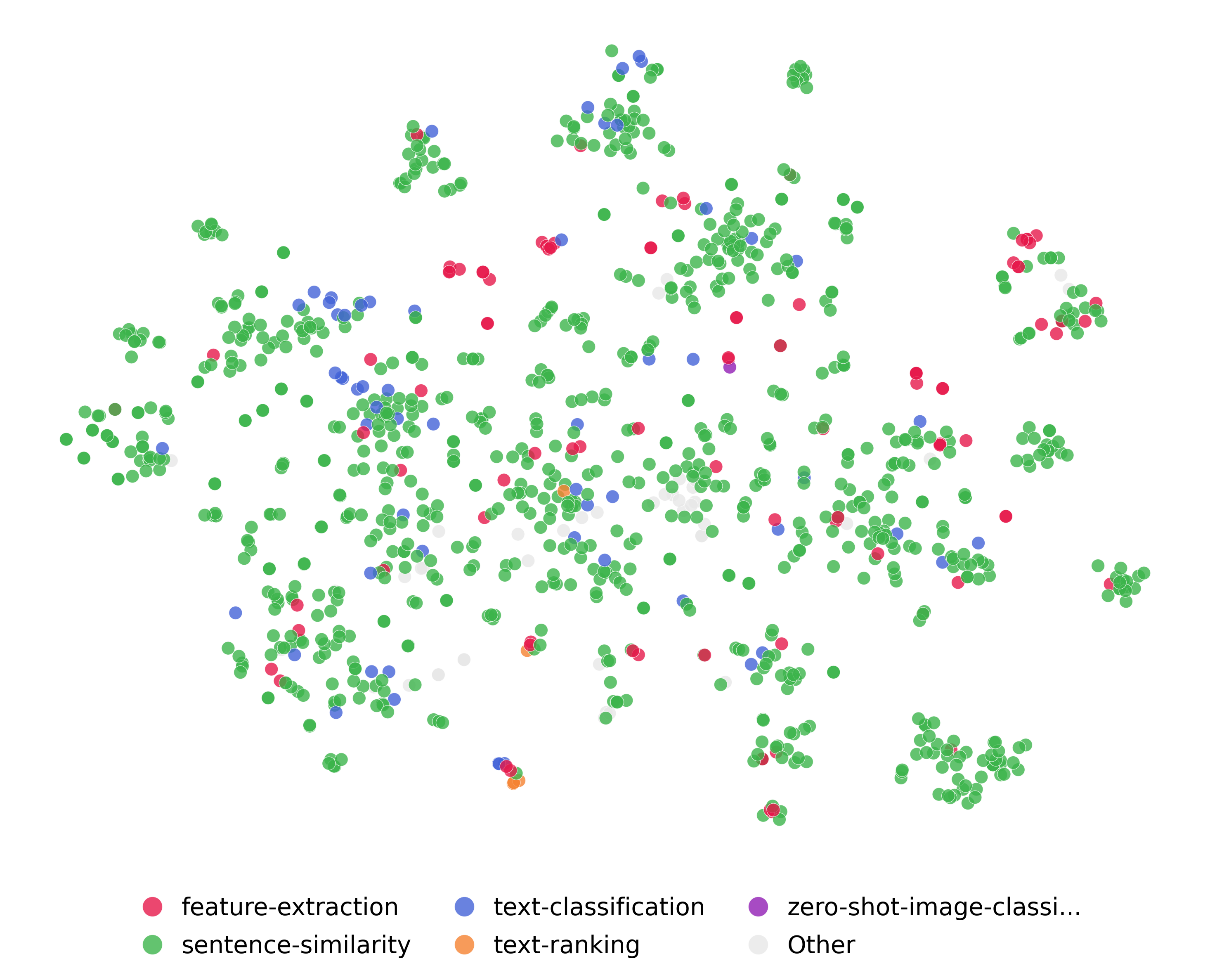}
        \caption{Training Task}
    \end{subfigure}

    \vspace{1em}

    % Row 3
    \begin{subfigure}[b]{0.48\textwidth}
        \centering
        \includegraphics[width=\textwidth]{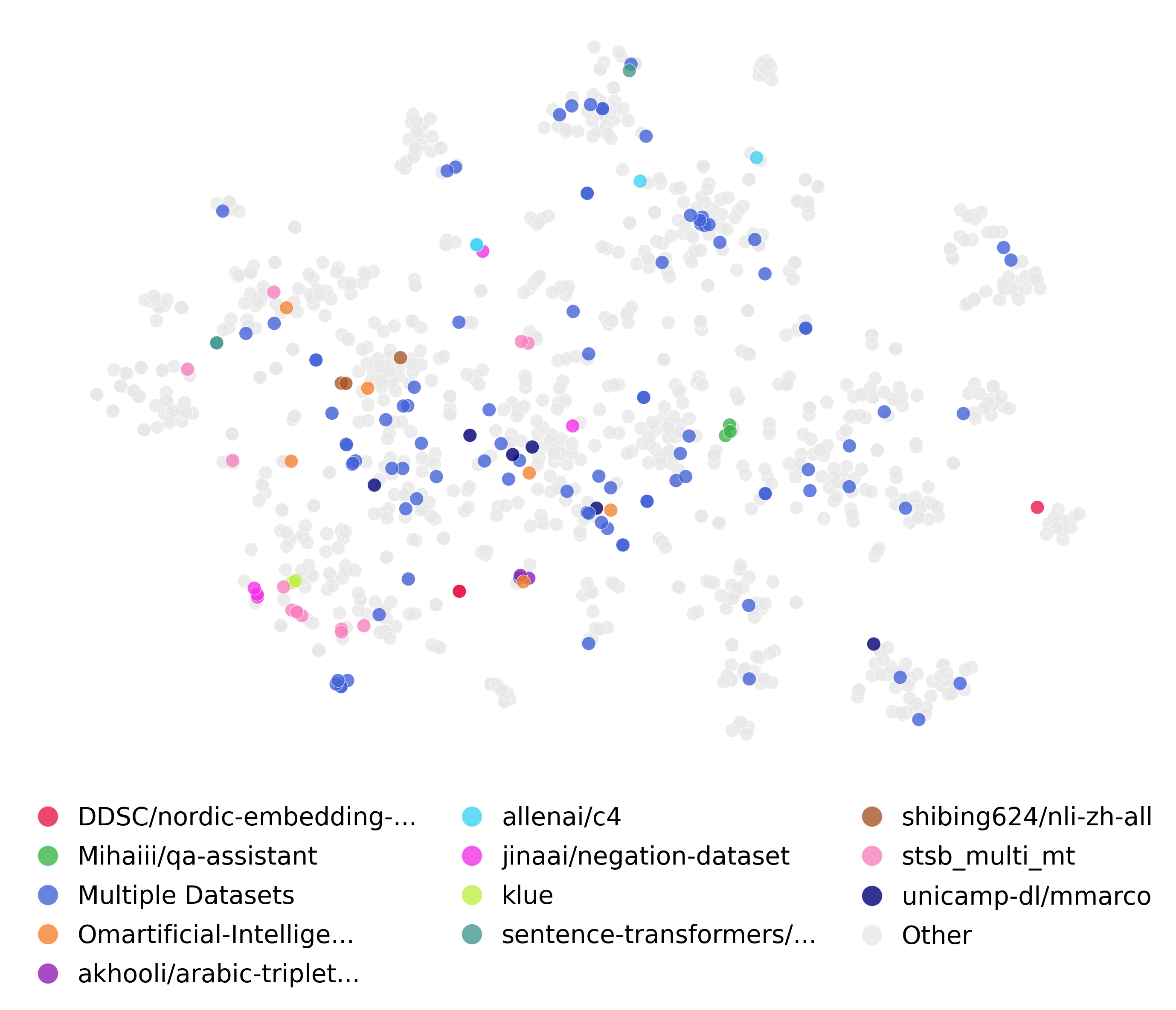}
        \caption{Training Dataset}
        \label{fig:clusters_dataset}
    \end{subfigure}
    \hfill
    \begin{subfigure}[b]{0.48\textwidth}
        \centering
        \includegraphics[width=\textwidth]{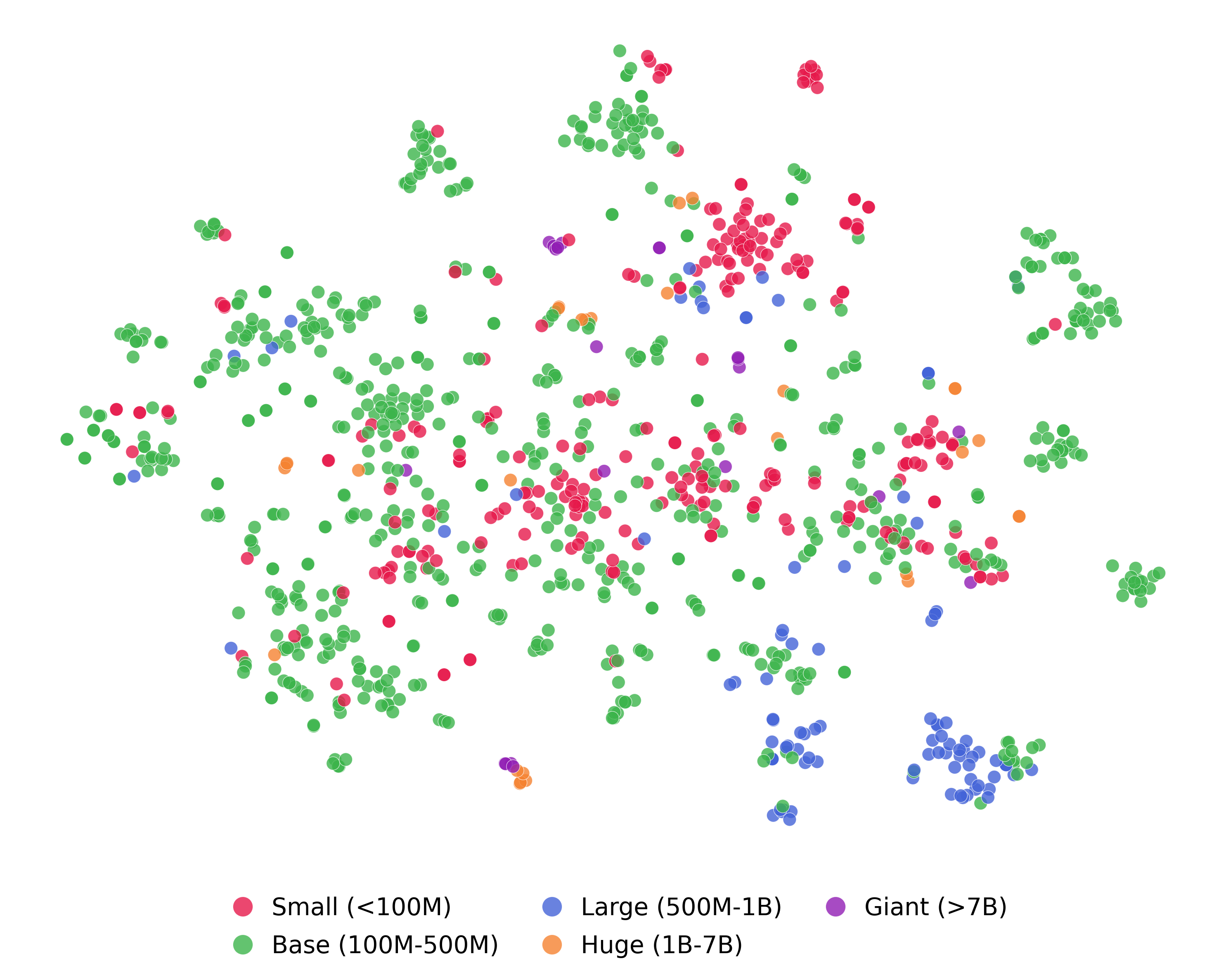}
        \caption{Enocder Parameter Size}
    \end{subfigure}

    \caption{Maps created with a sentence set of size 5000.}
    \label{fig:5k_maps}
\end{figure*}

\section{Full Encoder List for MTEB results}
\label{sec:full_encoder_list_MTEB}

\autoref{tab:mteb_full_model_list_metadata} shows the full list of 112 encoders with parameter size and dimensionality.

\clearpage
\onecolumn
\begingroup
\small
\setlength{\tabcolsep}{6pt}
\setlength{\LTcapwidth}{\textwidth}

\begin{longtable}{c p{8cm} r c}
\toprule
\textbf{\#} & \textbf{Encoder Name} & \textbf{\# Params} & \textbf{Dimensionality} \\
\midrule
\endfirsthead

\toprule
\textbf{\#} & \textbf{Encoder Name} & \textbf{\# Params} & \textbf{Dimensionality} \\
\midrule
\endhead

\bottomrule
\endfoot

\bottomrule
\noalign{\vspace{1ex}}
\caption{Full list of 112 encoders used in MTEB evaluation with parameter size and dimensionality.}
\label{tab:mteb_full_model_list_metadata} \\
\endlastfoot

1 & Alibaba-NLP/gte-base-en-v1.5 & 136.8M & 768 \\
2 & Alibaba-NLP/gte-large-en-v1.5 & 434.1M & 1024 \\
3 & Alibaba-NLP/gte-multilingual-base & 305.4M & 768 \\
4 & BAAI/bge-base-en-v1.5 & 109.5M & 768 \\
5 & BAAI/bge-large-en-v1.5 & 335.1M & 1024 \\
6 & BAAI/bge-small-en-v1.5 & 33.4M & 384 \\
7 & Gameselo/STS-multilingual-mpnet-base-v2 & 278.0M & 768 \\
8 & HIT-TMG/KaLM-embedding-multilingual-mini-instruct-v1 & 494.0M & 896 \\
9 & HIT-TMG/KaLM-embedding-multilingual-mini-instruct-v1.5 & 494.0M & 896 \\
10 & HIT-TMG/KaLM-embedding-multilingual-mini-v1 & 494.0M & 896 \\
11 & Lajavaness/bilingual-embedding-base & 278.0M & 768 \\
12 & Lajavaness/bilingual-embedding-large & 559.9M & 1024 \\
13 & Lajavaness/bilingual-embedding-small & 117.7M & 384 \\
14 & Linq-AI-Research/Linq-Embed-Mistral & 7.11B & 4096 \\
15 & Mihaiii/Bulbasaur & 17.4M & 384 \\
16 & Mihaiii/Ivysaur & 22.7M & 384 \\
17 & Mihaiii/Venusaur & 15.6M & 384 \\
18 & Mihaiii/gte-micro-v4 & 19.2M & 384 \\
19 & SGPT-1.3B-weightedmean-msmarco-specb-bitfit & 1.34B (Est) & 2048 \\
20 & SGPT-125M-weightedmean-msmarco-specb-bitfit & 137.8M (Est) & 768 \\
21 & SGPT-125M-weightedmean-nli-bitfit & 137.8M (Est) & 768 \\
22 & SGPT-5.8B-weightedmean-msmarco-specb-bitfit & 5.87B (Est) & 4096 \\
23 & Omartificial/Arabert-all-nli-triplet-Matryoshka & 135.2M & 768 \\
24 & Omartificial/Arabic-MiniLM-L12-v2-all-nli-triplet & 117.7M & 384 \\
25 & Omartificia/Arabic-all-nli-triplet-Matryoshka & 278.0M & 768 \\
26 & Omartificial/Arabic-labse-Matryoshka & 470.9M & 768 \\
27 & Omartificial/Arabic-mpnet-base-all-nli-triplet & 109.5M & 768 \\
28 & Omartificial/Marbert-all-nli-triplet-Matryoshka & 162.8M & 768 \\
29 & OrcaDB/cde-small-v1 & 281.1M & 768 \\
30 & OrcaDB/gte-base-en-v1.5 & 136.8M & 768 \\
31 & Salesforce/SFR-Embedding-2\_R & 7.11B & 4096 \\
32 & Salesforce/SFR-Embedding-Mistral & 7.11B & 4096 \\
33 & SmartComponents/bge-micro-v2 & 8.7M (Est) & 384 \\
34 & Snowflake/snowflake-arctic-embed-l & 334.1M & 1024 \\
35 & Snowflake/snowflake-arctic-embed-l-v2.0 & 567.8M & 1024 \\
36 & Snowflake/snowflake-arctic-embed-m & 108.9M & 768 \\
37 & Snowflake/snowflake-arctic-embed-m-long & 136.7M & 768 \\
38 & Snowflake/snowflake-arctic-embed-m-v1.5 & 108.9M & 768 \\
39 & Snowflake/snowflake-arctic-embed-s & 33.2M & 384 \\
40 & Snowflake/snowflake-arctic-embed-xs & 22.6M & 384 \\
41 & TaylorAI/bge-micro & 17.4M & 384 \\
42 & TaylorAI/bge-micro-v2 & 17.4M & 384 \\
43 & TaylorAI/gte-tiny & 22.7M & 384 \\
44 & WhereIsAI/UAE-Large-V1 & 335.1M & 1024 \\
45 & aari1995/German\_Semantic\_STS\_V2 & 335.7M & 1024 \\
46 & abhinand/MedEmbed-small-v0.1 & 33.4M & 384 \\
47 & ai-forever/ru-en-RoSBERTa & 403.7M & 1024 \\
48 & arkohut/jina-embeddings-v2-base-en & 137.4M & 768 \\
49 & avsolatorio/GIST-Embedding-v0 & 109.5M & 768 \\
50 & avsolatorio/GIST-all-MiniLM-L6-v2 & 22.7M & 384 \\
51 & avsolatorio/GIST-large-Embedding-v0 & 335.1M & 1024 \\
52 & avsolatorio/GIST-small-Embedding-v0 & 33.4M & 384 \\
53 & bigscience/sgpt-bloom-7b1-msmarco & 7.07B (Est) & 4096 \\
54 & cointegrated/rubert-tiny2 & 29.4M & 312 \\
55 & corto-ai/nomic-embed-text-v1 & 136.7M & 768 \\
56 & deepvk/USER-base & 124.0M & 768 \\
57 & ggrn/e5-small-v2 & 33.4M (Est) & 384 \\
58 & hkunlp/instructor-base & 110.2M (Est) & 768 \\
59 & hkunlp/instructor-large & 335.7M (Est) & 768 \\
60 & hkunlp/instructor-xl & 1.24B (Est) & 768 \\
61 & ibm-granite/granite-embedding-125m-english & 124.6M & 768 \\
62 & ibm-granite/granite-embedding-278m-multilingual & 278.0M & 768 \\
63 & ibm-granite/granite-embedding-30m-english & 30.3M & 384 \\
64 & infgrad/stella-base-en-v2 & 54.8M (Est) & 768 \\
65 & intfloat/e5-base & 109.5M & 768 \\
66 & intfloat/e5-base-v2 & 109.5M & 768 \\
67 & intfloat/e5-large & 335.1M & 1024 \\
68 & intfloat/e5-large-v2 & 335.1M & 1024 \\
69 & intfloat/e5-mistral-7b-instruct & 7.11B & 4096 \\
70 & intfloat/e5-small & 33.4M & 384 \\
71 & intfloat/e5-small-v2 & 33.4M & 384 \\
72 & intfloat/multilingual-e5-base & 278.0M & 768 \\
73 & intfloat/multilingual-e5-large & 559.9M & 1024 \\
74 & intfloat/multilingual-e5-large-instruct & 559.9M & 1024 \\
75 & intfloat/multilingual-e5-small & 117.7M & 384 \\
76 & jinaai/jina-embedding-b-en-v1 & 109.6M (Est) & 768 \\
77 & jinaai/jina-embedding-l-en-v1 & 335.0M (Est) & 1024 \\
78 & jinaai/jina-embedding-s-en-v1 & 35.3M (Est) & 512 \\
79 & jinaai/jina-embeddings-v2-base-en & 137.4M & 768 \\
80 & jinaai/jina-embeddings-v2-small-en & 32.7M & 512 \\
81 & jxm/cde-small-v1 & 281.1M & 768 \\
82 & katanemo/bge-large-en-v1.5 & 335.1M & 1024 \\
83 & khoa-klaytn/bge-base-en-v1.5-angle & 109.5M & 768 \\
84 & liddlefish/privacy\_embedding\_rag\_10k\_base\_15\_final & 109.5M & 768 \\
85 & llmrails/ember-v1 & 335.1M & 1024 \\
86 & minishlab/M2V\_base\_output & 7.6M & 256 \\
87 & minishlab/potion-base-2M & 1.9M & 64 \\
88 & minishlab/potion-base-4M & 3.8M & 128 \\
89 & minishlab/potion-base-8M & 7.6M & 256 \\
90 & mixedbread-ai/mxbai-embed-2d-large-v1 & 335.1M & 1024 \\
91 & mixedbread-ai/mxbai-embed-large-v1 & 335.1M & 1024 \\
92 & nomic-ai/nomic-embed-text-v1 & 136.7M & 768 \\
93 & nomic-ai/nomic-embed-text-v1-ablated & 136.7M (Est) & 768 \\
94 & nomic-ai/nomic-embed-text-v1-unsupervised & 136.7M (Est) & 768 \\
95 & nomic-ai/nomic-embed-text-v1.5 & 136.7M & 768 \\
96 & sdadas/mmlw-e5-base & 278.0M & 768 \\
97 & sdadas/mmlw-e5-large & 559.9M & 1024 \\
98 & sdadas/mmlw-e5-small & 117.7M & 384 \\
99 & sdadas/mmlw-roberta-base & 124.4M & 768 \\
100 & sdadas/mmlw-roberta-large & 435.0M & 1024 \\
101 & sentence-transformers/LaBSE & 470.9M & 768 \\
102 & sentence-transformers/all-MiniLM-L12-v2 & 33.4M & 384 \\
103 & sentence-transformers/all-MiniLM-L6-v2 & 22.7M & 384 \\
104 & sentence-transformers/all-mpnet-base-v2 & 109.5M & 768 \\
105 & paraphrase-multilingual-MiniLM-L12-v2 & 117.7M & 384 \\
106 & paraphrase-multilingual-mpnet-base-v2 & 278.0M & 768 \\
107 & sergeyzh/LaBSE-ru-turbo & 128.3M & 768 \\
108 & sergeyzh/rubert-tiny-turbo & 29.2M & 312 \\
109 & shibing624/text2vec-base-multilingual & 117.7M & 384 \\
110 & thenlper/gte-base & 109.5M & 768 \\
111 & thenlper/gte-large & 335.1M & 1024 \\
112 & thenlper/gte-small & 33.4M & 384 \\

\end{longtable}
\endgroup
\twocolumn

\section{Implementation Details for Spearman Correlation between True and Predicted MTEB Performance}
\label{sec:impletation_details_mteb}

We conduct the performance prediction using elastic net regression (ElasticNetCV) from the \href{https://scikit-learn.org/stable/}{scikit-learn} library with a 5-fold cross-validation strategy. 
ElasticNetCV is linear regression with both $\ell_1$ and $\ell_2$ regularisation.
The feature vectors are first normalised using \href{https://scikit-learn.org/stable/modules/generated/sklearn.preprocessing.StandardScaler.html}{standard scaler} to ensure zero mean and unit variance.
To mitigate overfitting, given the high dimensionality of our feature vectors (10,000) relative to the number of encoders (112), we first use PCA  to project the input feature vectors from 10,000 to 50 dimensions. 
We rely on the internal 5-fold cross-validation mechanism of ElasticNetCV to automatically tune the regularisation strength $\alpha$, which is selected from a log-scale grid of 100 values starting from $\alpha_{max}$ (the smallest value penalising all coefficients to zero) down to $\alpha_{max} \times 10^{-3}$.

\section{Full Results for MTEB Correlation With Feature Vectors}
\label{sec:full_MTEB}

\autoref{tab:full_mteb_table} shows the full results of the correlation between 68 MTEB task performance and our feature vectors of 112 encoders, along with $p\text{-value}$.
We visualise task performance of the top 10 tasks~\autoref{tab:individual_task_performance} by min-max normalising the predicted and actual performance in the same figure~\autoref{fig:predicted_vs_actual}.
The Spearman correlation between the predicted and average performance is 0.779, indicating a strong correlation.
This shows our encoders in the map are connected to their downstream task performance.

\begin{figure}[t!]
    \centering
    \includegraphics[width=1\linewidth]{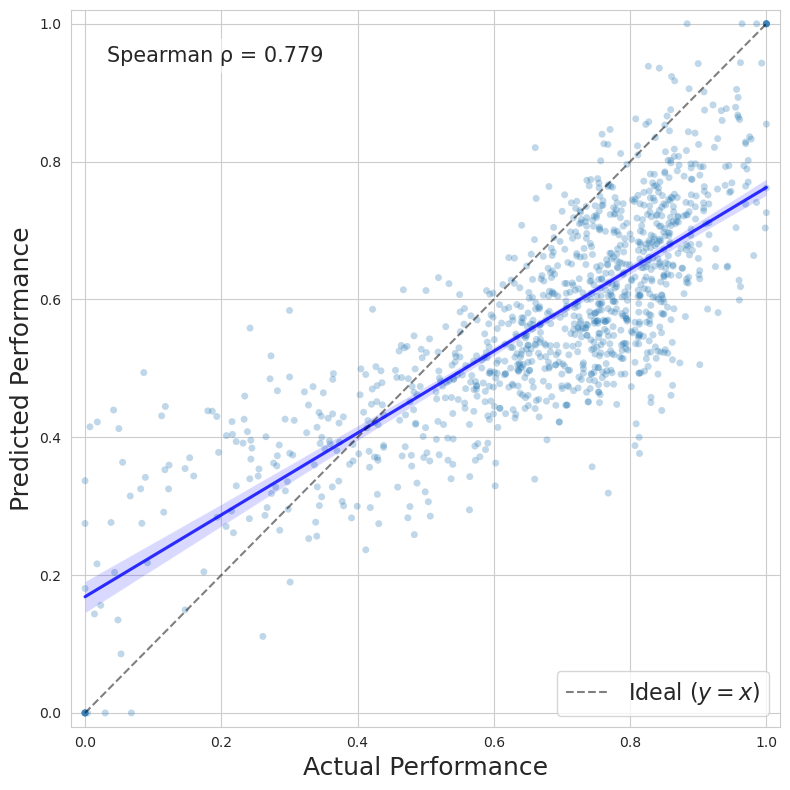}
    \caption{Predicted performance vs. actual performance for 112 encoders on top 10 tasks in~\autoref{tab:individual_task_performance}.}
    \label{fig:predicted_vs_actual}
\end{figure}

\autoref{fig:submap_QRE} visualise the 112 encoders in the map coloured by the \ac{QRE} value (sum of feature vector) for each encoder.
There are group patterns in the map, where encoders of similar \ac{QRE} values tend to be close to each other.

\begin{figure}[t!]
    
    \includegraphics[width=1\linewidth]{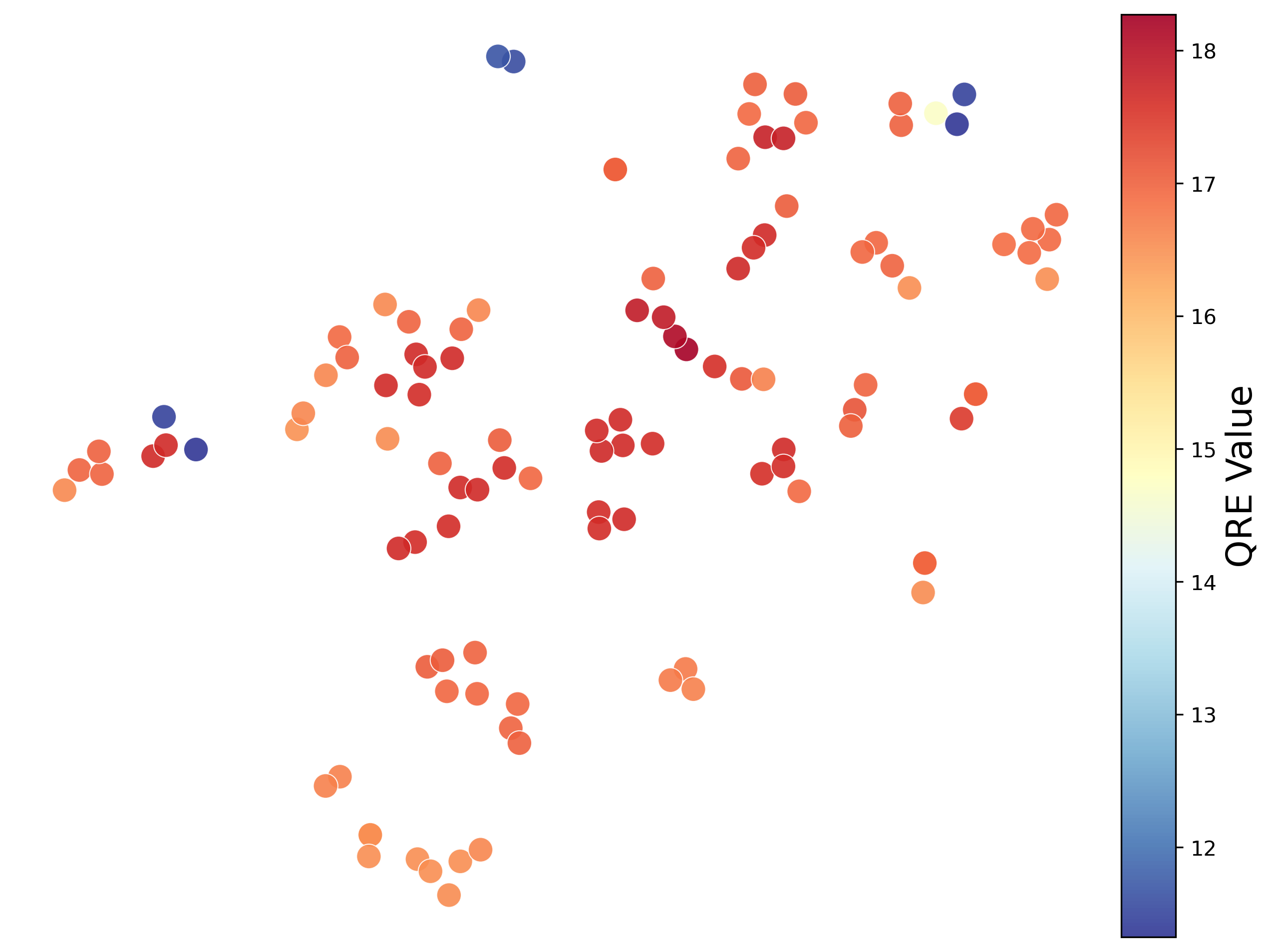}

    \caption{Submap of 112 encoders coloured by QRE values.}
    \label{fig:submap_QRE}
\end{figure}

\clearpage
\onecolumn
\begingroup
\small
\setlength{\tabcolsep}{4pt}
\setlength{\LTcapwidth}{\textwidth}

% [inline block 0: 2 envs, 87123 chars -> data_tex | \begin{longtable}{l c c c c l} \toprule...]

\endgroup
\twocolumn

\end{document}

%% file: myacronyms.tex
\usepackage{acronym}

\usepackage{etoolbox}
\makeatletter
\newif\if@in@acrolist
\AtBeginEnvironment{acronym}{\@in@acrolisttrue}
\newrobustcmd{\LU}[2]{\if@in@acrolist#1\else#2\fi}

\newcommand{\ACF}[1]{{\@in@acrolisttrue\acf{#1}}}
\makeatother

\acrodef{MLM}[MLM]{Masked Language Model}
\acrodefplural{MLM}[MLMs]{\LU{M}{m}asked \LU{L}{l}anguage \LU{M}{m}odels}
\acrodef{LLM}[LLM]{Large Language Model}
\acrodefplural{LLM}[LLMs]{Large Language Models}
\acrodef{SoTA}[SoTA]{state-of-the-art}
\acrodef{ICL}{\LU{I}{i}n-cotext \LU{L}{l}earning}
\acrodef{SCD}{\LU{S}{s}emantic \LU{C}{c}hange \LU{D}{d}etection}
\acrodef{WiC}{Word-in-Context}
\acrodef{ITML}[ITML]{Information-Theoretic Metric Learning}
\acrodef{SDML}[SDML]{Semantic Distance Metric Learning}
\acrodef{PIP}[PIP]{Pairwise Inner Product}
\acrodef{QRE}[QRE]{Quantum Relative Entropy}
\acrodef{KL}[KL]{Kullback-Leibler}
\acrodef{SVD}[SVD]{Singular Value Decomposition}
\acrodef{PSD}[PSD]{Positive Semi-Definite}